\renewcommand\bibentry[1]{\nocitep{#1}{\frenchspacing\@nameuse{BR@r@#1\@extra@b@citeb}}}
\newcommand{\mathbbm}[1]{\text{\usefont{U}{bbm}{m}{n}#1}}
\definecolor{tablelightgrey}{HTML}{EFEFEF}
\definecolor{forestgreen}{HTML}{228B22}
\def\eqref#1{equation~\ref{#1}}
\def\1{\bm{1}}
\def\vtheta{{\bm{\theta}}}
\def\vpi{{\bm{\pi}}}
\def\vtau{{\bm{\tau}}}
\def\vb{{\bm{b}}}
\def\vp{{\bm{p}}}
\def\vx{{\bm{x}}}
\def\vy{{\bm{y}}}
\def\vz{{\bm{z}}}
\def\mW{{\bm{W}}}
\def\mX{{\bm{X}}}
\DeclareMathAlphabet{\mathsfit}{\encodingdefault}{\sfdefault}{m}{sl}
\SetMathAlphabet{\mathsfit}{bold}{\encodingdefault}{\sfdefault}{bx}{n}
\newcommand{\E}{\mathbb{E}}
\newcommand{\Var}{\mathrm{Var}}
\newtheorem{theorem}{Theorem}
\newtheorem{proposition}[theorem]{Proposition}
\newtheorem{lemma}[theorem]{Lemma}
\newtcolorbox{autoraterbox}[2][]{%
  breakable,
  colback=#2!2,
  colframe=#2!60!black,
  colbacktitle=#2!18,
  coltitle=black,
  fonttitle=\bfseries,
  title=#1,
  boxrule=0.8pt,
  left=1.2mm, right=1.2mm, top=1.2mm, bottom=1.6mm,
}
\newcommand{\majoritem}[1]{%
  \par\noindent
  {\textcolor{blue!70!black}{\large$\bullet$}}\hspace{0.45em}\textbf{#1}\par\smallskip
}
\title{Judging with Confidence: Calibrating Autoraters to Preference Distributions}
\author[1,2]{Zhuohang Li}
\author[1]{Xiaowei Li}
\author[1,3]{Chengyu Huang}
\author[1]{Guowang Li}
\author[4]{Katayoon Goshvadi}
\author[4]{Bo Dai}
\author[4,5]{Dale Schuurmans}
\author[1]{Paul Zhou}
\author[1]{Hamid Palangi}
\author[1]{Yiwen Song}
\author[1]{Palash Goyal}
\author[6]{Murat Kantarcioglu}
\author[2]{Bradley A. Malin}
\author[7]{Yuan Xue}
\affil[1]{Google}
\affil[2]{Vanderbilt University}
\affil[3]{Cornell University}
\affil[4]{Google DeepMind}
\affil[5]{University of Alberta}
\affil[6]{Virginia Tech}
\affil[7]{Scale AI}
\begin{abstract}
The alignment of large language models (LLMs) with human values increasingly relies on using other LLMs as automated judges, or ``autoraters''. However, their reliability is limited by a foundational issue: they are trained on discrete preference labels, forcing a single ground truth onto tasks that are often subjective, ambiguous, or nuanced. We argue that a reliable autorater must learn to model the full distribution of preferences defined by a target population. In this paper, we propose a general framework for calibrating probabilistic autoraters to any given preference distribution. We formalize the problem and present two learning methods tailored to different data conditions: 1) a direct supervised fine-tuning for dense, probabilistic labels, and 2) a reinforcement learning approach for sparse, binary labels. Our empirical results show that finetuning autoraters with a distribution-matching objective leads to verbalized probability predictions that are better aligned with the target preference distribution, with improved calibration and significantly lower positional bias, all while preserving performance on objective tasks.
\end{abstract}
\begin{document}

\maketitle

\section{Introduction}

The alignment of large language models (LLMs) with human values~\citep{ouyang2022training} increasingly relies on using other powerful LLMs as automated judges, or ``autoraters'', to score model responses. This LLM-as-a-Judge paradigm~\citep{zheng2023judging} is now a cornerstone of evaluating and developing safer AI systems, particularly through methods such as reinforcement learning from AI feedback (RLAIF)~\citep{bai2022constitutional}.

Currently, autoraters are typically trained on discrete preference labels~\citep{wang2024pandalm,kim2024prometheus2,li2024generative}, which leads to a fundamental limitation:
(collective) human judgment %
does not correspond to
a single label, but rather a distribution~\citep{pavlick2019inherent,nie2020can}, especially in complex situations that involve uncertainty or balancing multiple criteria~\citep{arora2025healthbench}. 
Even among qualified annotators, disagreement is common, not simply due to noise but because of systematic differences in how individuals define problems, interpret evidence, or apply values and decision strategies~\citep{mumpower1996expert}.
Current autoraters are trained with a mode-seeking objective that collapses this rich distributional information into a single verdict (e.g., the majority label), which discards crucial uncertainty signals and erases minority viewpoints by construction.

We argue that for an autorater to be reliable, it must be calibrated to model the full distribution of human preferences. An ideal judge should recognize when a topic is contentious (e.g., a 50/50 split), when a preference is clear but not unanimous (e.g., 80/20), and when a judgment is objectively certain. Modeling this distribution is essential for effective risk management, fairness, and building robust alignment systems.

This paper introduces a general and scalable framework for calibrating autoraters' verbal probability predictions to any target preference distribution, while preserving their ability to generate natural-language reasoning traces. We %
introduce two finetuning methods, each tailored to a set of different data conditions. First, when dense, probabilistic labels from multiple annotators are available, we use direct supervised finetuning (SFT). Second, when only sparse, binary labels are available, we employ a reinforcement learning (RL) approach with rewards based on proper scoring rules.
Our empirical results validate this distribution-matching objective. Autoraters finetuned with our methods show significant improvements in performance, calibration, and reliability. Notably, our methods achieve an 18-51\% reduction in Mean Squared Error (MSE), a 4-45\% reduction in Expected Calibration Error (ECE), and a 7-81\% gain in consistency against positional bias. Notably, our findings offer guidance on annotation strategy: for a fixed budget, RL with many sparse, binary labels is more data-efficient than SFT with fewer dense, probabilistic labels, highlighting the benefits of prompt diversity. Our method also enhances alignment with human judgment on out-of-distribution tasks. On the PandaLM dataset, our finetuned Gemma-2-9B model achieves 73.17\% agreement with human annotations, outperforming all baselines including GPT-4. Moreover, this improved calibration on subjective tasks does not compromise performance on objective ones, as the same model achieves an overall accuracy of 46.57\% on JudgeBench, on par with Gemini-1.5-pro.

\begin{figure}
    \centering
    \includegraphics[width=0.98\linewidth]{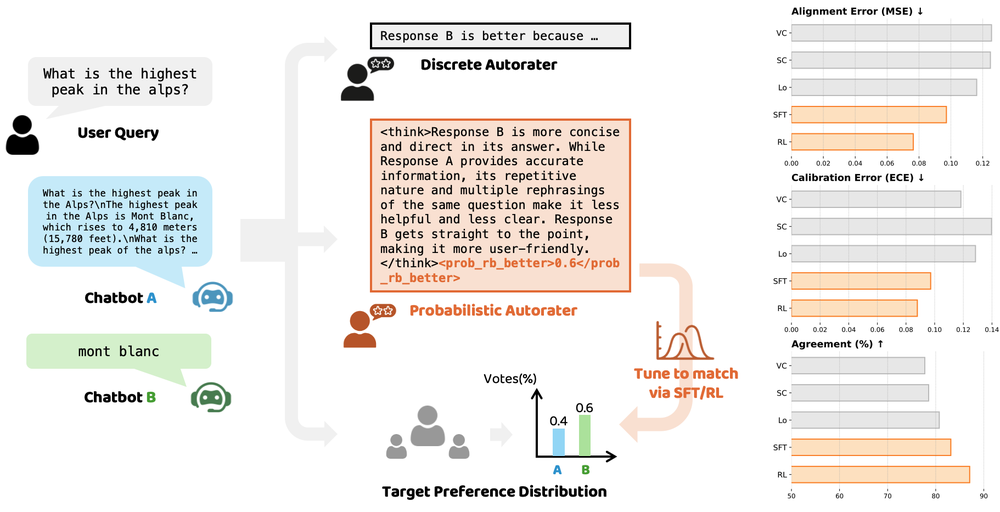}
    \vspace{-2mm}
    \caption{\textbf{Overview of discrete vs. probabilistic autoraters}.
    Left: Given a user query and two candidate responses, a \textbf{discrete} autorater returns a single preference (e.g., ``B is better''), collapsing annotator variability. A \textbf{probabilistic} autorater predicts the \emph{full} preference distribution and is finetuned via SFT/RL to match the target preference distribution.
    Right: Our finetuned autorater vs.\ zero-shot probabilistic conversions of discrete autoraters, including Verbalized Confidence (VC), Self-Consistency (SC), and Logits (Lo), evaluated using Gemma-2-9B on JudgeLM \emph{val} set.
    Alignment error is measured by MSE, calibration error by ECE, and agreement by percentage.}
    \label{fig:overview}
    \vspace{-2mm}
\end{figure}

\section{A Probabilistic Framework for Calibrating Autoraters}
\label{sec:prob_framework}

\subsection{Problem Formulation}
We consider the scenario of 
pairwise judgements where an input $\mX$ specifies a prompt that is associated with two responses $(A,B)$.  
The population’s ground-truth preference is modeled as a Bernoulli random variable
\[
  Y \in \{0,1\},\qquad
  Y \mid \mX=\vx \sim \mathrm{Bernoulli}\!\left(p^*(\vx)\right),
\]
where $Y{=}1$ indicates $B\succ A$ (i.e., $B$ is preferred to $A$), and $p^*(\vx)=\Pr[Y{=}1\mid \mX{=}\vx]$ is the (unknown) preference distribution for the pair $(A,B)$.
Let $h$ index a human annotator drawn from the population $p(h)$, then conceptually $p^*(\vx)$ represents the true population-level human preference $p^*(\vx)=\Pr_{h\sim p(h)}[B\succ A|\vx, h]$, i.e., the probability that a randomly chosen annotator would prefer $B$ given the context $\vx$.

\paragraph{Discrete Autorater.}
An autorater is a language model (LM) prompted to act as a judge. In the \emph{discrete} setting, the LM produces a single decision (e.g., via greedy decoding)
\[
  d_\vtheta(\vx)\in\{0,1\},
\]
or an uncalibrated
scalar margin $m_\vtheta(\vx)\in\mathbb{R}$ (e.g., a parsed rubric rating or a logit
difference), with decision $\mathbbm{1}\{m_\vtheta(\vx)\ge 0\}$.
Such outputs collapse the rater distribution at $\vx$ to a point estimate (typically the majority choice).

\paragraph{Probabilistic Autorater.}
A \emph{probabilistic} judge instead predicts the full preference distribution through its Bernoulli parameter,
\[
  p_\vtheta(\vx)\in[0,1]\;\approx\; p^*(\vx).
\]
A definitive decision can be recovered by thresholding $p_\vtheta(\vx)$ if needed, but the primary output is the (conditional) probability itself, which is optimized to match the population preference rather than merely to choose a label.
Crucially, this formulation does not rely on the Bradley-Terry assumption~\citep{bradley1952rank} typically seen in reward modeling, thereby allowing richer representations of uncertainty.

\subsection{Benefits of Probabilistic Autoraters}
\paragraph{Informativeness.}
In contrast to the mode-seeking behavior of discrete autoraters, probabilistic autoraters are optimized to match the full preference distribution. This probability prediction provides more information for cost-sensitive decision-making by revealing the aleatoric ambiguity within the task.
Probabilistic reporting in autoraters also improves fairness and auditability by revealing annotator disagreement, whereas discrete reporting collapses the minority viewpoints.

\paragraph{Alignment.}
Probabilistic autoraters can be better aligned with the target preference distribution.
In particular, it is straightforward to show that if $\Pr[0<p^*(\vx)<1]>0$, then any discrete autorater or any single human annotator who effectively reports a degenerate distribution $d(\vx)\in\{0,1\}$ is strictly worse than reporting $p(\vx)=p^*(\vx)$ under a strictly proper scoring rule.

\paragraph{Calibration.}
Additionally, this distribution-matching objective of probabilistic autoraters implies \emph{calibration}: if $p_\theta(\vx)=p^*(\vx)$ almost surely, then $\E[Y\mid p_\theta(\vx)=c]=c$ for all $c\in[0,1]$.
In practice, as $p_\theta$ approaches $p^*$, calibration error (e.g., ECE) shrinks.
By contrast, any probability prediction obtained from a discrete autorater \emph{post hoc} (e.g., vote fractions from self-consistency or logits passed through softmax) is not trained to recover $p^*(\vx)$ and thus is generally not calibrated.

\section{Finetuning Autoraters to Match the Preference Distribution}
\label{sec:method}

We introduce two distribution-matching finetuning paradigms for calibrating the autorater's probabilistic prediction $p_\vtheta$ to the ground truth preference distribution $p^*(\vx)$.
We focus on \emph{verbalized} probability because it is
(1) more flexible and interpretable than training a dedicated classification head by preserving the model's ability to generate natural-language rationales,
and (2) more efficient than sampling-based approaches, as it only requires a single decoding pass.

\paragraph{Setting 1: Direct Supervised Finetuning with Probabilistic Labels.}
When multiple annotations $(\vx, y^{(1)}, ..., y^{(m)})$ are available for each prompt $\vx$ (the pair to be judged), we estimate the population preference by the multi-annotator mean
$\hat p(\vx)\;=\;\frac{1}{m}\sum_{j=1}^m y^{(j)} \;\approx\; p^*(\vx).$
We then instruction-tune the autorater in a text-to-text fashion: given prompt $\vx$ (the pair), the target sequence includes optional CoT reasoning and a structured numeric field encoding the probability that B is better.
We apply standard autoregressive supervised finetuning (SFT) to maximize the likelihood of
the target sequence $\vtau_{1:S}$ composed using $\hat p(\vx)$:
\[
  \mathcal{L}_{\mathrm{SFT}}
  = - \mathbb{E}_{(\vx,\hat p)} \Bigl[ \sum_{t=1}^S \log P_\vtheta(\vtau_t \mid \vx, \vtau_{1:t-1}) \Bigr],
\]

where $\vtau_{i}$ is the $i$th token of the sequence $\vtau$ and $S$ is its sequence length.

In practice, we parse the generated string to recover the numeric form of $p_\vtheta(\vx)$; training encourages the model to produce $p_\vtheta(\vx)\approx \hat p(\vx)$ while retaining the ability to perform free-form reasoning.

\paragraph{Setting 2: Reinforcement Learning from Binary Labels via Piecewise Proper Rewards.}
When only single-sample binary labels $(\vx_i,y_i)$ are available (e.g., via crowdsourced platforms), we treat the autorater as a sequence policy $\vpi_\vtheta(\vtau\mid \vx)$ that produces a token sequence $\vtau$ containing a numeric probability $p$. A deterministic parser $g$ maps $\vtau$ to either a valid probability prediction in $[0,1]$ or $\bot$ (unparsable), with the probability of producing a parsable response denoted as $s_\vtheta(\vx)$:
\[
  g:\ \mathcal{T}\to [0,1]\cup\{\bot\},\quad \vtau\mapsto p \text{ or } \bot,
  \qquad s_\vtheta(\vx)=\Pr_{\vtau\sim\vpi_\vtheta(\cdot\mid \vx)}[\,g(\vtau)\neq\bot\,].
\]
We use \emph{piecewise} strictly proper scoring rules as rewards. Let $y\in\{0,1\}$ (1 = B better), we have:

\begin{itemize}[leftmargin=2mm]

\item \textbf{Brier reward:}
\begin{equation}
  R_{\mathrm{Brier}}(\vtau;y)=
  \begin{cases}
    1 - (p-y)^2, & \text{if } g(\vtau)=p\in[0,1],\\
    0, & \text{if } g(\vtau)=\bot.
  \end{cases}
  \label{eq:RBrier}
\end{equation}

\item \textbf{Logarithmic reward (with clipping):}
For numerical stability, we consider a fixed small $\epsilon\in(0,\tfrac12)$ and define $p'=\mathrm{clip}(p,\epsilon,1-\epsilon)$. Then
\begin{equation}
  R_{\mathrm{Log}}(\vtau;y)=
  \begin{cases}
    y\log p' + (1-y)\log(1-p'), & \text{if } g(\vtau)=p\in[0,1],\\
    \log\epsilon, & \text{if } g(\vtau)=\bot.
  \end{cases}
  \label{eq:RLog}
\end{equation}

\end{itemize}

For either reward $R\in\{R_{\mathrm{Brier}},R_{\mathrm{Log}}\}$, the goal is to maximize the population objective
\begin{equation}
  J_R(\vtheta) \;=\; \mathbb{E}_{(\vx,y)} \Bigl[\, \mathbb{E}_{\vtau\sim\vpi_\vtheta(\cdot\mid \vx)}[\,R(\tau;y)\,] \Bigr].
  \label{eq:popobj}
\end{equation}
Compared to the Brier reward, the Log reward heavily penalizes overconfident yet incorrect predictions.
Both objectives can be optimized with policy-gradient-based reinforcement learning methods using the parsed numeric probability.

\subsection{Consistency Analysis}
\label{subsec:pop_opt}

In Setting 1, the multi-annotator mean $\hat p(\vx)$ is an unbiased estimate of the true preference distribution $p^*(\vx)$ with variance decreasing as $1/m$ (Appendix~\ref{app:pref_est}, Lemma~\ref{lem:mc_estimator}), and thereby provides a high fidelity target for learning.

In Setting 2, at the population level, the optimal autorater policy under either the piecewise Brier reward or the clipped Log reward is (i) \emph{parsable} everywhere (i.e.,
$s_\vtheta(\vx)=1$), (ii) \emph{deterministic} in the numeric value it reports, and (iii) reports the \emph{truthful} probability (i.e., recovers the target preference distribution, up to clipping for the Log reward). The proof is included in Appendix~\ref{app:proposition_proof}.

\begin{proposition}[Fisher Consistency of Brier and Log Rewards]
\label{prop:unified-consistency}
Assume the policy class can realize, for each $\vx$, a deterministic numeric output $p_\vtheta(\vx)\in[0,1]$ with $s_\vtheta(\vx)=1$. Then any global maximizer of $J_R$ in \eqref{eq:popobj} satisfies:
\begin{enumerate}[label=\textbf{(\alph*)}, leftmargin=3em, itemsep=2pt]
  \item \textbf{Brier:} $p_\vtheta(\vx)=p^*(\vx)$ for almost all $\vx$.
  \item \textbf{Log (with clipping):} $p'_\vtheta(\vx)=\mathrm{clip}\bigl(p^*(\vx),\epsilon,1-\epsilon\bigr)$ for almost all $\vx$.
\end{enumerate}
Moreover, for both rewards, any stochasticity in the reported numeric value or any non-zero density associated with unparsable outputs strictly reduces the expected reward; thus, an optimizer is deterministic and fully parsable for almost every $x$.
\end{proposition}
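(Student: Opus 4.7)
The plan is to reduce the population objective to a pointwise maximization in $\vx$, and then exploit the strict concavity (strict properness) of the Brier and Log scoring rules to pin down the unique optimizer at each point. Writing $J_R(\vtheta)=\E_\vx\bigl[F_R(\vx;\vpi_\vtheta(\cdot\mid \vx))\bigr]$ where
\[
F_R(\vx;Q)=\E_{y\sim\mathrm{Bern}(p^*(\vx))}\,\E_{\vtau\sim Q}[R(\vtau;y)],
\]
the realizability assumption lets the policy class choose $\vpi_\vtheta(\cdot\mid \vx)$ independently at almost every $\vx$, so it suffices to maximize $F_R(\vx;\cdot)$ pointwise. For each $\vx$, pushing $\vtau$ through the parser $g$ induces a measure with total mass $s=s_\vtheta(\vx)\in[0,1]$ on $[0,1]$ (call the parsable part $Q$) and complementary mass $1-s$ on $\bot$.

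For the Brier case, I would use the identity $\E_y[(p-y)^2]=(p-p^*(\vx))^2+p^*(\vx)(1-p^*(\vx))$ to decompose
\[
F_{\mathrm{Brier}}(\vx;Q)=s\bigl[1-p^*(\vx)(1-p^*(\vx))\bigr]-\int_{[0,1]}(p-p^*(\vx))^2\,dQ(p).
\]
The bracket is $\ge 3/4>0$, so the first summand is strictly maximized by $s=1$; the second is non-positive and vanishes iff $Q=\delta_{p^*(\vx)}$. Together this gives (a) and establishes strict suboptimality of either stochasticity in the reported value or any mass on $\bot$.

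For the Log case, write $p'=\mathrm{clip}(p,\epsilon,1-\epsilon)$ and $f(p')=p^*(\vx)\log p'+(1-p^*(\vx))\log(1-p')$. The function $f$ is strictly concave on $[\epsilon,1-\epsilon]$ with unique maximizer $p'^{\star}=\mathrm{clip}(p^*(\vx),\epsilon,1-\epsilon)$, and
\[
F_{\mathrm{Log}}(\vx;Q)=s\!\int_{[0,1]} f\bigl(\mathrm{clip}(p,\epsilon,1-\epsilon)\bigr)\,dQ(p)+(1-s)\log\epsilon.
\]
Since $f(p'^{\star})\ge f(\tfrac12)=\log\tfrac12>\log\epsilon$ (using $\epsilon<\tfrac12$), shifting any $\bot$-mass onto a point mass at $p'^{\star}$ strictly increases $F_{\mathrm{Log}}$, forcing $s=1$. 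Strict concavity of $f$ then forces the clipped value $p'_\vtheta(\vx)$ to equal $p'^{\star}$ almost surely; Jensen's inequality gives strict inequality under any non-degenerate distribution over clipped values, yielding (b).

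The one subtlety I would flag as the main obstacle is the determinism claim for the Log reward: strict concavity is lost on the flat regions $[0,\epsilon]$ and $[1-\epsilon,1]$ of the clipping map, so the \emph{raw} numeric output $p$ need not be deterministic when $p^*(\vx)\notin(\epsilon,1-\epsilon)$ — only its clipped image is. This matches exactly what the proposition claims (uniqueness of $p'_\vtheta$ rather than $p_\vtheta$), and I would state the moreover-clause in terms of the clipped output for the Log case and the raw output for Brier. The measurable selection needed to translate the pointwise argument into a global maximizer $\vtheta$ is standard under the realizability hypothesis and I would simply invoke it.
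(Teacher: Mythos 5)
Your proof is correct and follows essentially the same route as the paper's: reduce to a pointwise objective in $\vx$, apply the bias--variance identity for Brier and strict concavity plus Jensen for Log, and argue $s_\vtheta(\vx)=1$ from the positivity of the conditional score at the optimum. One place where you are actually \emph{more} careful than the paper: the paper's proof parenthetically equates ``$P'$ almost surely constant'' with ``deterministic numeric output,'' which is not literally correct for the Log reward when $p^*(\vx)\notin(\epsilon,1-\epsilon)$, since any raw report $p$ in the flat clipping region yields the same $p'$; your explicit flag that uniqueness holds only for the \emph{clipped} value $p'_\vtheta$ (consistent with how part~(b) of the proposition is stated) is a genuine and correct refinement of the paper's phrasing.
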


\section{Experimental Setup}
\label{sec:exp_setup}

Here, we describe the datasets used for calibration and evaluation, our scalable preference-annotation pipeline, the autorater output format and reward instantiation, and the finetuning setup.

\paragraph{Calibration Data.}
We build on a subset of prompts from the JudgeLM corpus~\citep{zhu2025judgelm}, which aggregates instruction-following tasks (e.g., Alpaca-GPT4~\citep{peng2023instruction}, Dolly-15K~\citep{DatabricksBlog2023DollyV2}) paired with responses from 11 open-source LLMs (including LLaMA~\citep{touvron2023llama}, Alpaca~\citep{taori2023alpaca}, Vicuna~\citep{chiang2023vicuna}). The source corpus contains 105K prompts. To ensure the same total \emph{annotation budget} across the two finetuning paradigms, we construct two calibration splits: 
(i) SFT uses 5K prompts with 10 annotations each; (ii) RL uses 50K prompts with a single annotation each. We also apply swap augmentation~\citep{li2024generative} by duplicating each pair with A/B swapped and the label flipped. For evaluation, we sample 1K prompts from the original validation set, each with 10 annotations to form probabilistic labels.

\paragraph{Preference Annotation.}
Since most existing datasets lack sufficient multi-rater annotation for reliable probability estimates,
to evaluate our method at scale,
we employ \emph{Gemini-2.5-Flash}~\citep{comanici2025gemini} as an advanced teacher to generate pairwise preference labels with brief rationales. We set the temperature to $1.0$ and condition on a randomly sampled persona (Appendix~\ref{sec:appendix_prompt}) to increase coverage and reduce prompt-induced bias. Across calibration and evaluation splits, this yields $\sim$110K total annotations. For each comparison instance $\vx$, we convert $m$ independent teacher votes into a probabilistic target $\hat p(\vx)=\tfrac{1}{m}\sum_{j=1}^m \mathbbm{1}\{B\succ A\}$. For the SFT + CoT setting, we additionally elicit teacher reasoning traces as expert demonstrations by conditioning on the obtained $\hat p(\vx)$.

\paragraph{Response Format and Reward Instantiation.}
The autorater is prompted to compare responses A vs.\ B and emit (optionally) a chain-of-thought enclosed in \texttt{<think>} tags, followed by a single probability within a \texttt{<prob\_rb\_better>} tag representing
$p_{\vtheta}(\vx)\approx \Pr[B \succ A\mid \vx]$.
Our prompts are provided in Appendix~\ref{sec:appendix_prompt}.
In SFT, we maximize the likelihood of the target token sequence that encodes $\hat p(\vx)$.
In RL, we parse the numeric probability via a deterministic parser $g$ (implemented by rule-based string-matching) and optimize either $R_{\mathrm{Brier}}$ in~\eqref{eq:RBrier} or $R_{\mathrm{Log}}$ in~\eqref{eq:RLog}. Unparsable outputs receive a default reward (0 for Brier; $\log \epsilon$ for Log), which empirically drives the \emph{parsability rate} $s_{\vtheta}(\vx)$ toward 1. When dense labels are available, the SFT objective provides a low-variance target for $p_{\vtheta}$;
with single-label supervision, the RL objectives remain consistent and, being strictly proper, recover $p^*(\vx)$ in expectation.

\paragraph{Base Models and Finetuning Protocol.}
We finetune the instruction-tuned \textit{Gemma-2-9B}~\citep{team2024gemma} and \textit{Qwen-2.5-7B}~\citep{qwen2.5} models with full-parameter updates for both SFT and RL. For RL, we use GRPO~\citep{shao2024deepseekmath}. For $R_{\mathrm{Log}}$ we set $\epsilon=10^{-3}$ to avoid degenerate rewards. Full hyperparameters and training details are reported in Appendix~\ref{sec:appendix_impl_details}.

\vspace{-2mm}
\section{Experimental Results}
\vspace{-2mm}

\subsection{Main Evaluation}
\label{subsec:main_eval}
\vspace{-1mm}

In this section, we empirically demonstrate that our distribution-matching finetuning approaches can lead to better performing and calibrated autoraters.

\paragraph{Baselines.}
We consider the following four types of \textit{zero-shot baselines} that can be immediately applied to any existing discrete autoraters to obtain probabilistic predictions:
\begin{enumerate}[label={(\arabic*)},leftmargin=*]
\item \textit{Verbalized Confidence~\citep{tian2023just}}: The autorater is directly prompted to provide a confidence score without intermediate reasoning.

\item \textit{Verbalized Confidence w/ CoT~\citep{wei2022chain}}: The autorater first generates a step-by-step chain-of-thought explanation before providing its confidence score.

\item \textit{Self-Consistency~\citep{wang2023self}}: The autorater aggregates preferences over $N$ independent CoT samples. The confidence for a response is the fraction of samples that voted for it.

\item \textit{Logit-based Confidence}: Confidence is derived by applying a softmax function to the model's output logits $z_{\tau_i}$ for the verbalized preference tokens $\tau_i$ (``A'' or ``B''), i.e., $p(y=i \mid \vx)={e^{z_{\tau_i}}}/{\sum_i e^{z_{\tau_i}}},\; i\in\{0,1\}$.

\end{enumerate}

Additionally, we consider the following \textit{calibration baselines} that extend the logit-based confidence:
\begin{enumerate}[label={(\arabic*)},leftmargin=*]
  \item \textit{Temperature Scaling~\citep{guo2017calibration}} is a post-hoc calibration method that rescales pre-softmax logits $\vz$ by a single scalar temperature $T>0$ learned on a held-out calibration set by minimizing negative log likelihood, producing calibrated confidences $\hat \vp_{\mathrm{TS}}(\vy\mid \vx)=\mathrm{softmax}(\vz/T)$.

  \item \textit{Contextual Calibration~\citep{zhao2021calibrate}} is a test-time debiasing method that estimates the prompt-induced prior using a content-free probe (e.g., “N/A”), then corrects predictions by subtracting this bias in logit space (or dividing probabilities):
  $\hat \vp_{\mathrm{CC}}(\vy\mid \vx)=\mW \vp(\vy\mid \vx)$, where $\mW=\text{diag}(\vp(\vy\mid \text{[N/A]}))^{-1}$ makes the content-free prediction uniform and reduces bias.
  
  \item \textit{Batch Calibration~\citep{zhou2024batch}} is a zero-shot, inference-only correction that estimates the contextual bias $\vb$ from the current test batch $\{\vx_i\}_{i=0}^B$ via $\vb= \E_{\vx\sim P(\vx)} \vp(\vy \mid \vx) \approx\tfrac{1}{B}\sum_{i=1}^B \vp(\vy\mid \vx_i)$. Each example is then calibrated by dividing by this bias term (or equivalently, subtracting $\log \vb$ from logits): $\hat \vp_{\mathrm{BC}}(\vy\mid \vx)\propto \vp(\vy\mid \vx)/\vb$. To ensure estimation accuracy, we use the entire test set in our experiments to estimate the bias.
\end{enumerate}

\begin{table*}[t]
\centering
\caption{Main experiment results comparing our methods against zero-shot and calibration baselines on two models. We evaluate alignment (MSE), performance (Agreement, F1 Score), and calibration (ECE, Brier).
}
\label{tab:main_results}
\vspace{-2mm}
\sisetup{detect-weight, mode=text}
\renewrobustcmd{\bfseries}{\fontseries{b}\selectfont}
\resizebox{0.72\linewidth}{!}{
\begin{tabular}{@{}ll S[table-format=1.4] S[table-format=1.4] S[table-format=1.4] S[table-format=1.4] S[table-format=1.4]@{}}
\toprule
& & {\textbf{Alignment}} & \multicolumn{2}{c}{\textbf{Performance}} & \multicolumn{2}{c}{\textbf{Calibration}} \\
\cmidrule(lr){3-3} \cmidrule(lr){4-5} \cmidrule(lr){6-7}
\textbf{Model} & \textbf{Method} & {\textit{MSE}$\downarrow$} & {\textit{Agr.}$\uparrow$} & {\textit{F1}$\uparrow$} & {\textit{ECE}$\downarrow$} & {\textit{Brier}$\downarrow$} \\
\midrule

\multirow{14}{*}{\rotatebox[origin=c]{90}{\textbf{Gemma-2-9B}}}
& \multicolumn{6}{l}{\underline{\textit{Zero-shot Baselines}}} \\
& \quad Verbalized Confidence     & 0.1255 & 0.7773 & 0.5260 & 0.1183 & 0.1615 \\
& \quad Verbalized Confidence w/ CoT & 0.1065 & 0.7893 & 0.5345 & 0.0869 & 0.1445 \\
& \quad Self-consistency (N=10)   & 0.1248 & 0.7853 & 0.5482 & 0.1397 & 0.1551 \\
& \quad Self-consistency (N=30)   & 0.1217 & 0.7921 & 0.5361 & 0.1374 & 0.1514 \\
& \quad Logits                    & 0.1162 & 0.8074 & 0.5665 & 0.1285 & 0.1416 \\
\cmidrule(lr){2-7}
& \multicolumn{6}{l}{\underline{\textit{Calibration Baselines}}} \\
& \quad Temperature Scaling       & 0.0839 & 0.8074 & 0.5665 & \bfseries 0.0827 & 0.1224 \\
& \quad Contextual Calibration    & 0.1384 & 0.7753 & 0.5226 & 0.1598 & 0.1728 \\
& \quad Batch Calibration         & 0.1153 & 0.8104 & 0.5482 & 0.1255 & 0.1406 \\
\cmidrule(lr){2-7}
& \multicolumn{6}{l}{\underline{\textit{Ours}}} \\

& \cellcolor[HTML]{EFEFEF} \quad SFT                       & \cellcolor[HTML]{EFEFEF} 0.0972 & \cellcolor[HTML]{EFEFEF} 0.8314 & \cellcolor[HTML]{EFEFEF} 0.5623 & \cellcolor[HTML]{EFEFEF} 0.0972 & \cellcolor[HTML]{EFEFEF} 0.1257 \\
& \cellcolor[HTML]{EFEFEF} \quad SFT w/ CoT                & \cellcolor[HTML]{EFEFEF} 0.1033 & \cellcolor[HTML]{EFEFEF} 0.8214 & \cellcolor[HTML]{EFEFEF} 0.5575 & \cellcolor[HTML]{EFEFEF} 0.1111 & \cellcolor[HTML]{EFEFEF} 0.1332 \\
& \cellcolor[HTML]{EFEFEF} \quad RL (Brier)          & \cellcolor[HTML]{EFEFEF} \bfseries 0.0764 & \cellcolor[HTML]{EFEFEF} \bfseries 0.8706 & \cellcolor[HTML]{EFEFEF} \bfseries 0.5895 & \cellcolor[HTML]{EFEFEF} 0.0879 & \cellcolor[HTML]{EFEFEF} \bfseries 0.0946 \\
& \cellcolor[HTML]{EFEFEF} \quad RL (Log)            & \cellcolor[HTML]{EFEFEF} 0.0934 & \cellcolor[HTML]{EFEFEF} 0.8545 & \cellcolor[HTML]{EFEFEF} 0.5780 & \cellcolor[HTML]{EFEFEF} 0.1141 & \cellcolor[HTML]{EFEFEF} 0.1173 \\
\midrule

\multirow{14}{*}{\rotatebox[origin=c]{90}{\textbf{Qwen-2.5-7B}}}
& \multicolumn{6}{l}{\underline{\textit{Zero-shot Baselines}}} \\
& \quad Verbalized Confidence     & 0.1823 & 0.6723 & 0.4486 & 0.1846 & 0.2276 \\
& \quad Verbalized Confidence w/ CoT & 0.1571 & 0.7241 & 0.4866 & 0.1693 & 0.1965 \\
& \quad Self-consistency (N=10)   & 0.1916 & 0.7091 & 0.4765 & 0.2168 & 0.2314 \\
& \quad Self-consistency (N=30)   & 0.1840 & 0.7251 & 0.4861 & 0.2075 & 0.2212 \\
& \quad Logits                    & 0.1775 & 0.7382 & 0.4982 & 0.2102 & 0.2133 \\
\cmidrule(lr){2-7}
& \multicolumn{6}{l}{\underline{\textit{Calibration Baselines}}} \\
& \quad Temperature Scaling       & 0.1173 & 0.7402 & 0.4982 & 0.1529 & 0.1646 \\
& \quad Contextual Calibration    & 0.1551 & 0.7632 & 0.5159 & 0.1888 & 0.1893 \\
& \quad Batch Calibration         & 0.1796 & 0.7402 & 0.4978 & 0.2129 & 0.2162 \\
\cmidrule(lr){2-7}
& \multicolumn{6}{l}{\underline{\textit{Ours}}} \\
& \cellcolor[HTML]{EFEFEF} \quad SFT                       & \cellcolor[HTML]{EFEFEF} 0.1143 & \cellcolor[HTML]{EFEFEF} 0.8264 & \cellcolor[HTML]{EFEFEF} 0.5590 & \cellcolor[HTML]{EFEFEF} 0.1341 & \cellcolor[HTML]{EFEFEF} 0.1394 \\
& \cellcolor[HTML]{EFEFEF} \quad SFT w/ CoT                & \cellcolor[HTML]{EFEFEF} 0.1033 & \cellcolor[HTML]{EFEFEF} 0.8122 & \cellcolor[HTML]{EFEFEF} \bfseries 0.6075 & \cellcolor[HTML]{EFEFEF} 0.1095 & \cellcolor[HTML]{EFEFEF} 0.1324 \\
& \cellcolor[HTML]{EFEFEF} \quad RL (Brier)          & \cellcolor[HTML]{EFEFEF} \bfseries 0.0893 & \cellcolor[HTML]{EFEFEF} \bfseries 0.8575 & \cellcolor[HTML]{EFEFEF} 0.5804 & \cellcolor[HTML]{EFEFEF} \bfseries 0.1015 & \cellcolor[HTML]{EFEFEF} \bfseries 0.1103 \\
& \cellcolor[HTML]{EFEFEF} \quad RL (Log)            & \cellcolor[HTML]{EFEFEF} 0.1192 & \cellcolor[HTML]{EFEFEF} 0.8244 & \cellcolor[HTML]{EFEFEF} 0.5580 & \cellcolor[HTML]{EFEFEF} 0.1472 & \cellcolor[HTML]{EFEFEF} 0.1474 \\
\bottomrule
\end{tabular}
}
\vspace{-3mm}
\end{table*}

\paragraph{Metrics.} We %
assess the following three aspects 
of the autorater:
(1) its \textit{alignment} to the target preference distribution, as measured by the Mean Squared Error (MSE) between the predicted $p_\vtheta(\vx)$ and the high fidelity estimate $\hat p(\vx)$ of the true preference 
distribution $p^*(\vx)$,
(2) its \textit{performance}, in terms of agreement (Agr.) with the majority label (i.e., the mode of the target distribution) and the resulting F1 score,
and (3) its \textit{calibration}, as measured by Expected Calibration Error (ECE)~\citep{guo2017calibration} and Brier score.
A more detailed discriptions of these metrics are provided in Appendix~\ref{sec:appendix_impl_details}.

\paragraph{Distribution-Matching Tuning Improves Preference Calibration.}
As shown in Table~\ref{tab:main_results}, our distribution-matching finetuning methods consistently outperform both zero-shot and calibration baselines across all metrics. While prompting strategies like Chain-of-Thought and self-consistency improve upon simple verbalized confidence, they still result in high alignment errors. In contrast, our finetuning approach drastically reduces this error. For instance, RL with a Brier reward achieves an MSE of just $0.0764$ on Gemma-2-9B. This superior alignment translates directly into stronger performance and better calibration. Notably, on Gemma-2-9B, the RL-Brier model attains the highest agreement ($0.8706$), F1 score ($0.5895$), and lowest Brier score ($0.0946$) among all methods. The benefits are even more pronounced on Qwen-2.5-7B, where the same model achieves an agreement of $0.8575$---a $12.4\%$ improvement over Contextual Calibration, the best-performing baseline---while simultaneously achieving best calibration, as measured by both ECE and Brier score.

\begin{table}[t]
\centering
\caption{Evaluation of positional bias. We report \emph{Consistency} (higher is better) and expected \emph{Absolute Symmetry Deviation} ($\E[|\Delta_{SD}|]$, lower is better).
}
\label{tab:positional_bias}

\vspace{-2mm}

\sisetup{detect-weight, mode=text} %
\renewrobustcmd{\bfseries}{\fontseries{b}\selectfont}

\resizebox{0.72\linewidth}{!}{
\begin{tabular}{@{}l
                S[table-format=1.4] S[table-format=1.4]
                S[table-format=1.4] S[table-format=1.4]@{}}
\toprule
\multirow{2}{*}{\textbf{Method}} &
\multicolumn{2}{c}{\textbf{Gemma-2-9B}} &
\multicolumn{2}{c}{\textbf{Qwen-2.5-7B}} \\
\cmidrule(lr){2-3} \cmidrule(l){4-5}
& {\textit{Consistency} $\uparrow$} & {\textit{Abs.\ Dev.} $\downarrow$}
& {\textit{Consistency} $\uparrow$} & {\textit{Abs.\ Dev.} $\downarrow$} \\
\midrule
\multicolumn{5}{@{}l}{\underline{\textit{Zero-shot Baselines}}} \\
\quad Verbalized Confidence           & 0.7301 & 0.2242 & 0.4964 & 0.3461 \\
\quad Verbalized Confidence w/ CoT    & 0.8094 & 0.1709 & 0.6399 & 0.3120 \\
\quad Logits                           & 0.7963 & 0.1912 & 0.6529 & 0.3388 \\
\midrule
\addlinespace
\multicolumn{5}{@{}l}{\underline{\textit{Calibration Baselines}}} \\
\quad Temperature Scaling             & 0.7963 & 0.1239 & 0.6489 & 0.1953 \\
\quad Contextual Calibration          & 0.7021 & 0.3202 & 0.7422 & 0.2473 \\
\quad Batch Calibration               & 0.7973 & 0.1893 & 0.6549 & 0.2130 \\
\addlinespace
\midrule
\multicolumn{5}{@{}l}{\underline{\textit{Ours}}} \\
\rowcolor{tablelightgrey} \quad SFT               & 0.8375 & 0.1875 & 0.8284 & 0.1827 \\
\rowcolor{tablelightgrey} \quad SFT w/ CoT        & 0.7803 & 0.2291 & 0.8335 & 0.1654 \\
\rowcolor{tablelightgrey} \quad RL (Brier)  & \bfseries 0.8926 & \bfseries 0.1026 & \bfseries 0.9007 & \bfseries 0.0964 \\
\rowcolor{tablelightgrey} \quad RL (Log)    & 0.8776 & 0.1231 & 0.8726 & 0.1259 \\
\bottomrule
\end{tabular}
}
\vspace{-4mm}
\end{table}

\paragraph{RL on Binary Labels is More Annotation-Efficient than SFT on Probabilistic Labels.}
A key finding is that for a fixed annotation budget, RL is a more annotation-efficient training paradigm than SFT. As seen in Table~\ref{tab:main_results}, RL-tuned autoraters, trained on 50K prompts with a single binary label each, consistently outperform their SFT counterparts, which were trained on 5K prompts with 10 aggregated labels each. We attribute this to the benefits of data diversity: the performance boost from seeing a $10\times$ larger set of unique prompts appears to outweigh the advantage of learning from a less
\begin{wrapfigure}{r}{0.36\textwidth}
  \vspace{-0.6\baselineskip}
  \centering
  \includegraphics[width=\linewidth]{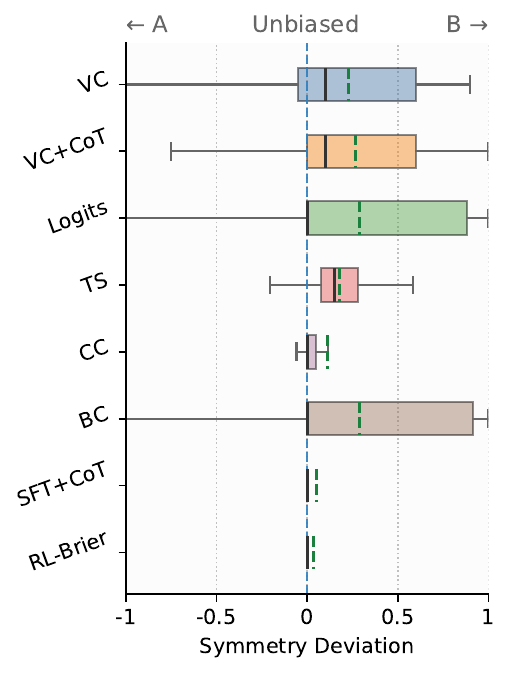}
  \vspace{-8mm}
    \caption{Positional bias by method for Qwen-2.5-7B. Each horizontal box shows the distribution of \emph{Symmetry Deviation} ($\Delta_{SD}$): $0$ is swap-symmetric, $-1$ indicates bias toward A, and $+1$ toward B. The \textbf{black solid} line marks the \emph{median}, while the \textcolor{green!50!black}{\textbf{green dashed}} line marks the \emph{mean}.}
  \label{fig:bias-box}
  \vspace{-16mm}
\end{wrapfigure}
noisy, aggregated target on a smaller dataset. Within the RL framework, the Brier reward consistently yields better results than the Log reward. This is likely because the Log reward’s heavy penalties for
tail miscalibrations
can introduce training instability, whereas the Brier reward provides a smoother optimization landscape.

\subsection{Evaluation of Positional Bias}
\label{subsec:bias_eval}
LM-based autoraters, even those based on powerful proprietary models such as GPT-4, are known to be susceptible to positional bias~\citep{zheng2023judging,wang2024large}, which causes their final verdict to be dependent on the order of the responses and thus undermines the reliability of their judgment result.
To evaluate the positional bias of the probabilistic autoraters, for each prompt $\vx$,
we perform inference twice by swapping the order of the responses to obtain two predicted probabilities: $p_{\text{orig}}$ that predicts $\Pr[B \succ A\mid \vx]$ and $p_{\text{swap}}$ that predicts $\Pr[A \succ B\mid \vx]$.

\paragraph{Metrics.}
Following \citet{zheng2023judging}, we measure \emph{consistency}, i.e., the ratio of cases where the autorater gives consistent verdicts when swapping the order of the two responses.
Additionally, we measure \emph{Symmetry Deviation} ($\Delta_{SD}$) as $\Delta_{SD} \coloneqq p_{\text{orig}} + p_{\text{swap}} - 1$.
Ideally, an unbiased autorater should produce $p_{\text{orig}}$ and $p_{\text{swap}}$ that sum to $1$, and thus the estimated $\Delta_{SD}$ would approximate $0$. A positive $\Delta_{SD}$ indicates bias toward response B, and vice versa.
We report the expected \emph{Absolute Symmetry Deviation} across the dataset as
$\E[|\Delta_{SD}|] \approx \frac{1}{N}\sum_{i=1}^N \lvert{p_{\text{orig}, i} + p_{\text{swap}, i} - 1}\rvert.$

\paragraph{Distribution-Matching Tuning Reduces Positional Bias.}
Probabilistic autoraters, like their discrete counterparts, are susceptible to positional bias, as shown in Table~\ref{tab:positional_bias}. This bias can be severe. For example, the zero-shot verbalized confidence method on Qwen-2.5-7B yields a poor consistency of just $0.4964$. While other baselines, including CoT prompting and post-hoc calibration, can mitigate this issue, significant bias remains. For instance, on Qwen-2.5-7B, no single baseline excels at both metrics, with Contextual Calibration achieving the highest consistency ($0.7422$) and Temperature Scaling achieving the lowest deviation ($0.1953$).
By contrast, our distribution-matching finetuning nearly eliminates this bias. The RL-Brier model, in particular, achieves a consistency of $0.9007$ and a near-perfect absolute symmetry deviation of $0.0964$. This dramatic improvement is visualized in Figure~\ref{fig:bias-box}, which shows that baseline methods exhibit heavily skewed deviation distributions, indicating a systematic preference for one response position (in this case, response B). Conversely, our finetuned models center the distribution tightly around zero, demonstrating robust swap-symmetry and verifying their effectiveness at debiasing autoraters.

\subsection{Out-of-Distribution Evaluation on Human-Annotated Data}
\label{subsec:ood_human}
To validate our approach against genuine human judgments, we conduct an out-of-distribution evaluation on the PandaLM test set~\citep{wang2024pandalm} using our autoraters finetuned on JudgeLM
\begin{wraptable}{r}{0.5\textwidth}
\centering
\caption{Comparison of autorater performance on the PandaLM~\citep{wang2024pandalm} test set based on human-annotated data. We report Agreement, Precision, Recall, and F1 Score. Results marked by \textsuperscript{*} are reported by \citet{zhu2025judgelm}.
}
\label{tab:pandalm_results}
\sisetup{detect-weight, mode=text} %
\renewrobustcmd{\bfseries}{\fontseries{b}\selectfont} %
\vspace{-2mm}
\resizebox{\linewidth}{!}{
\begin{tabular}{@{}l S[table-format=1.4] S[table-format=1.4] S[table-format=1.4] S[table-format=1.4]@{}}
\toprule
\textbf{Method} & {\textbf{Agreement}$\uparrow$} & {\textbf{Precision}$\uparrow$} & {\textbf{Recall}$\uparrow$} & {\textbf{F1}$\uparrow$} \\
\midrule
\multicolumn{5}{@{}l}{\underline{\textit{Zero-shot Baselines}}} \\
\quad GPT-3.5\textsuperscript{*}   & 0.6296 & 0.6195 & 0.6359 & 0.5820 \\
\quad GPT-4\textsuperscript{*}     & 0.6647 & 0.6620 & 0.6815 & 0.6180 \\
\midrule
\multicolumn{5}{@{}l}{\underline{\textit{Finetuned Baselines}}} \\
\quad PandaLM-7B\textsuperscript{*} & 0.5926 & 0.5728 & 0.5923 & 0.5456 \\
\quad JudgeLM-7B\textsuperscript{*} & 0.6507 & 0.6689 & \bfseries 0.7195 & 0.6192 \\
\midrule
\multicolumn{5}{@{}l}{\underline{\textit{Ours (Gemma-2-9B)}}} \\
\rowcolor{tablelightgrey} \quad SFT         & 0.6856 & 0.7103 & 0.5196 & 0.4998 \\
\rowcolor{tablelightgrey} \quad SFT w/ CoT  & 0.7247 & 0.6533 & 0.6166 & 0.6266 \\
\rowcolor{tablelightgrey} \quad RL (Brier)  & 0.7317 & 0.6983 & 0.6048 & 0.6220 \\
\rowcolor{tablelightgrey} \quad RL (Log)    & \bfseries 0.7357 & 0.4923 & 0.5487 & 0.5176 \\
\midrule
\multicolumn{5}{@{}l}{\underline{\textit{Ours (Qwen-2.5-7B)}}} \\
\rowcolor{tablelightgrey} \quad SFT         & 0.7027 & 0.4720 & 0.5240 & 0.4947 \\
\rowcolor{tablelightgrey} \quad SFT w/ CoT  & 0.7187 & 0.6358 & 0.6522 & \bfseries 0.6417 \\
\rowcolor{tablelightgrey} \quad RL (Brier)  & 0.7297 & \bfseries 0.8185 & 0.5617 & 0.5564 \\
\rowcolor{tablelightgrey} \quad RL (Log)    & 0.7157 & 0.8129 & 0.5454 & 0.5361 \\
\bottomrule
\end{tabular}
}
\vspace{-8mm}
\end{wraptable}
subset. This benchmark contains 1K samples, each independently annotated by three human experts. Following the standard protocol for this dataset, we treat the majority vote as the ground truth and report agreement, precision, recall, and F1 score.

\paragraph{Calibrated Autoraters Are Better Aligned with Human Preference.}
As shown in Table~\ref{tab:pandalm_results}, our models demonstrate superior alignment with human judgments compared to both powerful zero-shot models like GPT-4 and specialized, finetuned judges, including PandaLM-7B~\citep{wang2024pandalm} and JudgeLM-7B~\citep{zhu2025judgelm}. The results are particularly compelling given the data efficiency of our method. For example, our Qwen-2.5-7B model, after SFT with CoT, achieves a state-of-the-art F1 score of $0.6417$. This performance surpasses JudgeLM-7B, a model trained on the full 100K JudgeLM training set, i.e., $20\times$ more data than what's used by our SFT model. Even without CoT, our SFT model achieves an agreement of $0.7027$, outperforming all baselines, including GPT-4. These findings confirm that our distribution-matching framework is a highly data-efficient method for aligning autoraters with nuanced human preferences.

\subsection{Out-of-Distribution Evaluation on Objective Tasks}
\label{subsec:objective}
To assess performance on tasks with a single ground truth (i.e., the target preference distribution degenerates to a single point), we evaluate our models on JudgeBench~\citep{tan2025judgebench}, a 
\begin{wraptable}{r}{0.6\textwidth}
\centering
\caption{Evaluation of autoraters on JudgeBench~\citep{tan2025judgebench} across four objectively labeled tasks: Knowledge, Reasoning, Mathematics, and Coding. We report evaluation accuracy in percentage. Results marked by \textsuperscript{*} are reported by \citet{tan2025judgebench}.
}
\label{tab:judgebench}
\vspace{-2mm}
\resizebox{\linewidth}{!}{
\begin{tabular}{@{}l
                S[table-format=2.2]
                S[table-format=2.2]
                S[table-format=2.2]
                S[table-format=2.2]
                S[table-format=2.2]@{}}
\toprule
\textbf{Method} & \textbf{Knowledge} & \textbf{Reasoning} & \textbf{Math} & \textbf{Coding} & \textbf{Overall} \\
\midrule
\multicolumn{6}{@{}l}{\underline{\textit{Zero-shot Baselines}}} \\
\quad GPT-4o\textsuperscript{*}                 & 44.16 & 47.96 & 66.07 & {\bfseries 61.90} & 50.86 \\
\quad Gemini-1.5-pro\textsuperscript{*} & 45.45 & 44.90 & 53.57 & 28.57 & 44.57 \\
\midrule
\multicolumn{6}{@{}l}{\underline{\textit{Multi-Agent Baseline}}} \\
\quad ChatEval\textsuperscript{*}               & 32.47 & 31.63 & 44.64 & 30.95 & 34.00 \\
\midrule
\multicolumn{6}{@{}l}{\underline{\textit{Finetuned Baselines}}} \\
\quad PandaLM-7B\textsuperscript{*}                &  9.09 & 21.43 &  7.14 & 16.67 & 13.14 \\
\quad Prometheus2-7B\textsuperscript{*}         & 38.31 & 25.51 & 35.71 & 42.86 & 34.86 \\
\quad JudgeLM-7B\textsuperscript{*}             & 23.38 & 29.59 & 32.14 & 11.90 & 25.14 \\
\quad AutoJ-13B\textsuperscript{*}                  & 40.26 & 29.59 & 44.64 & 28.57 & 36.57 \\
\quad Skywork-Critic-8B\textsuperscript{*}  & {\bfseries 51.30} & 54.08 & {\bfseries 73.21} & 33.33 & {\bfseries 53.43} \\
\midrule
\multicolumn{6}{@{}l}{\underline{\textit{Ours (RL w/ Brier)}}} \\
\rowcolor{tablelightgrey} \quad Qwen-2.5-7B                &  39.61 & 46.94 &  60.71 & 38.10 & 44.86 \\
\rowcolor{tablelightgrey} \quad Gemma-2-9B         & 39.61 & {\bfseries 55.10} & 58.93 & 35.71 & 46.57 \\
\bottomrule
\end{tabular}
}
\vspace{-3mm}
\end{wraptable}
benchmark comprising four objective tasks (Knowledge, Reasoning, Mathematics, and Coding) with binary, verifiable labels. This benchmark allows for comparison against a diverse set of models, including state-of-the-art proprietary APIs such as GPT-4o and Gemini-1.5-pro,
multi-agent judges
such as ChatEval~\citep{chan2024chateval}, and several specialized finetuned judges, including PandaLM-7B~\citep{wang2024pandalm}, Prometheus2-7B~\citep{kim2024prometheus2}, JudgeLM-7B~\citep{zhu2025judgelm}, AutoJ-13B~\citep{li2024generative}, and Skyuwork-Critic-8B~\citep{skyworkcritic2024}.
Following the official protocol~\citep{tan2025judgebench}, we mitigate positional bias by evaluating each response pair twice, with swapped response order, and aggregating the results to obtain the final verdict.

\paragraph{Calibrated Autoraters Remain Performant on Objective Tasks.}
As shown in Table~\ref{tab:judgebench}, training our autoraters to model preference distributions does not compromise their performance on objective tasks. Our RL-Brier tuned Gemma-2-9B model, for instance, achieves the \emph{highest} accuracy of any model on the reasoning task ($55.10\%$). Its overall accuracy of $46.57\%$ surpasses strong baselines like Gemini-1.5-pro and all other finetuned judges except for Skywork-Critic-8B, which was trained on a substantially larger dataset\footnote{While the exact size of the training data is not disclosed, the Skywork-Critic-8B model is described as having been finetuned on an array of high-quality datasets, including the Skywork-Reward-Preference dataset (80K samples), the Open-Critic-GPT dataset (55K samples), and other human-annotated and synthetic data.}. Our Qwen-2.5-7B model is also highly competitive, achieving an overall accuracy of $44.86\%$. These results demonstrate that our calibration framework produces versatile probabilistic autoraters that excel at judging subjective tasks without sacrificing their effectiveness on objective, fact-based evaluations.

\section{Related Work}
\label{sec:related_work}

\paragraph{Uncertainty of Human Annotations.} There is a growing recognition that human-annotated data is not monolithic. Researchers have highlighted the importance of modeling label ambiguity and disagreement in standard classification tasks \citep{nie2020can,baan2022stop,zhou2022distributed}. Most relevant to our work, \citet{elangovan2025beyond} argue that standard correlation metrics for evaluating LLM judges are insufficient and propose new metrics that stratify data based on human label uncertainty. We take the next step by not only evaluating with respect to this uncertainty, but also proposing methods to directly train models to capture it.

\paragraph{Calibration of LLMs.} The calibration of LLMs is a well-studied problem. Early work focused on post-hoc calibration methods or prompting strategies to elicit confidence \citep{tian2023just,xiong2024can}. Other approaches use supervised fine-tuning to teach models to express uncertainty, for example by using a model's own empirical accuracy as a target label~\citep{lin2022teaching}. More recently, reinforcement learning has been used to improve calibration. \citet{xu2024sayself} use a quadratic reward function with PPO to calibrate a model for question answering. \citet{tao2024trust} combine a ranking loss with an order-preserving reward to align a model's outputs. In the context of reward modeling, \citet{leng2025taming} address the overconfidence of reward models directly by proposing PPO variants to align quality with verbalized confidence.
In concurrent work, \citet{stangel2025rewarding,damani2025beyond} propose designing rewards with proper scoring rules to improve confidence calibration on question-answering tasks.
Our work differs by focusing specifically on the autorater calibration problem and by proposing a framework grounded in modeling the true distribution of human preferences, rather than a single notion of correctness.

\paragraph{LLM-as-a-Judge.} The use of powerful large language models as automated evaluators has been explored extensively. This includes the creation of benchmarks~\citep{zheng2023judging,zeng2024evaluating,tan2025judgebench}, analyses of various biases~\citep{wang2024large,ye2024justice}, and methods for training specialized judge models~\citep{wang2024pandalm,kim2024prometheus,zhu2025judgelm,li2024generative,saha2025learning}.
Additionally, recent work has also investigated the role of model uncertainty~\citep{xie2025empirical} and non-transitivity~\citep{xu2025investigating} in LLM judge evaluations.
Our work contributes to this line by addressing a fundamental aspect of judge reliability: its calibration to the inherent subjectivity of the evaluation task.

\section{Conclusion}

In this paper, we address the core limitation of training autoraters on discrete preference labels, a practice that overlooks the subjective and distributional nature of human judgment. We introduced a general probabilistic framework to calibrate autoraters to model the full preference distribution.
Our empirical results show that finetuning with distribution-matching objective leads to autoraters that are better aligned with target preference distributions, with significant improvements in calibration and a substantial reduction in positional bias.
By shifting the goal from predicting a single verdict to modeling the spectrum of human opinion, this work contributes to building more reliable, fair, and robust AI alignment systems.

\section*{Acknowledgments}
The authors would like to thank Kai Chen from Google DeepMind for his helpful discussion and constructive feedback during the early stage of this project.

\section*{Author Contributions}
ZL and YX conceptualized the project and designed the experiments. ZL conducted the experiments and analysis. ZL and YX summarized the major findings and drafted the manuscript with inputs from XL, CH, GL, KG, BD, DS, PZ, HP, YS, and PG. DS, MK, and BAM provided intellectual input and contributed to manuscript revisions. All authors approved the final version. 

\bibliographystyle{abbrvnat}
\nobibliography*
\bibliography{references}

\appendix

\newpage

\section{Extended Related Work}

\paragraph{Disagreement in Human Judgments.}
Psychology studies have shown that even when individuals are presented with the same evidence, they can rationally arrive at different choices for what is the ``best'' because of systematic differences in their judgmental processes~\citep{mumpower1996expert},
including (i) different problem definitions, where disagreement stems from judging different environmental criteria or a confusion between scientific facts and personal value, (ii) different organizing principles, where individuals may apply different cue weights, function forms, or overall biases when integrating the same set of information, as explained by Social Judgment Theory~\citep{brehmer1976social}, and (iii) different mental models about how the evidence was generated.
In our context, this explains why human annotators can disagree about LLM outputs even under identical prompts or instructions: they face different value trade-offs (e.g., safety vs. helpfulness) or apply different thresholds for judgment, which may result in distinct yet internally coherent choices of what is preferred.

\paragraph{Probabilistic Models of Agreement.}
Probabilistic modeling of agreement dates back to the 1950s.
Classic models such as the \citet{dawid1979maximum} model aim to infer a single ``gold standard'' label from multiple, often noisy, annotators.
Studies~\citep{passonneau2014benefits,paun2018probabilistic} have shown that by modeling annotator reliability, these methods can produce high-quality data even from non-expert crowd workers and outperform simpler aggregation techniques such as majority vote.
In contrast to this approach, a recent line of work shows that, for many complex and subjective tasks, disagreement is not simply noise but a valid and reproducible signal that reflects legitimate differences in human interpretation~\citep{pavlick2019inherent,nie2020can}.
Our probabilistic autorater aligns with this view by treating the annotator heterogeneity as the prediction target to capture the full distribution of human judgments.

\paragraph{Learning from Human Feedback.}
Canonical Reinforcement Learning from Human Feedback (RLHF)~\citep{ouyang2022training} involves learning a reward model from pairwise human preferences, often by fitting a Bradley-Terry model via maximum likelihood estimation.
This approach implicitly assumes that heterogeneous feedback from different humans is merely a noisy estimate of a single ground-truth preference.
However, a growing body of work~\citep{munos2024nash,ge2024axioms,siththaranjan2024distributional} suggests that, in the context of AI alignment, preference heterogeneity often reflects legitimate differences in individual values and should be modeled directly rather than averaged away.
Our research complements this line of work as our finetuned probabilistic autoraters can be employed in such preference-based RL to better align models with the diversity of human preferences.

\section{Proofs}

\subsection{Preference Distribution and Estimation}
\label{app:pref_est}

Let $h$ index an annotator drawn from a population distribution $p(h)$. For a pairwise input $\mX=\vx$, define the annotator-specific preference probability
\[
  p_h(\vx) \;=\; \Pr[Y=1 \mid \mX=\vx, h], \qquad Y\in\{0,1\}\ \text{($1$ indicates $B \succ A$)}.
\]
The population (ground-truth) preference is the annotator-average:
\[
  p^*(\vx) \;=\; \E_{h\sim p(h)}[\,p_h(\vx)\,].
\]
A common special case here is when annotators hold \emph{stable} preferences, i.e.,
where $p_h(\vx)\in\{0,1\}$ (each annotator has a fixed judgment per $\vx$). In that case, $p_h(\vx)$ is the indicator of ``$h$ prefers B'' and $p^*(\vx)$ is the population fraction preferring B.

Given $m$ i.i.d.\ labels $\{y^{(j)}\}_{j=1}^m$ collected by sampling annotators $h_j\stackrel{\text{i.i.d.}}{\sim} p(h)$ and then $y^{(j)}\sim\mathrm{Bernoulli}(p_{h_j}(\vx))$, the Monte Carlo estimator
\[
  \hat p_m(\vx) \;=\; \frac{1}{m}\sum_{j=1}^m y^{(j)}
\]
is an unbiased estimate of $p^*(\vx)$ with variance decreasing as $1/m$.

\begin{lemma}[Unbiasedness and variance of the multi-annotator estimate]
\label{lem:mc_estimator}
For any fixed $\vx$ and i.i.d.\ sampling as above,
\[
  \E\!\left[\hat p_m(\vx)\right] \;=\; p^*(\vx),
  \qquad
  \Var\!\left[\hat p_m(\vx)\right] \;=\; \frac{p^*(\vx)\bigl(1-p^*(\vx)\bigr)}{m}.
\]
\end{lemma}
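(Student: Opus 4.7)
The plan is to exploit the hierarchical structure of the sampling mechanism: for each $j=1,\dots,m$, first draw $h_j \sim p(h)$ and then $y^{(j)}\mid h_j \sim \mathrm{Bernoulli}(p_{h_j}(\vx))$, with the pairs $(h_j,y^{(j)})$ mutually independent across $j$. Both claims then follow from the tower property and elementary properties of Bernoulli random variables.

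For unbiasedness, I would apply the tower property to a single term,
\[
  \E[y^{(j)}] \;=\; \E_{h_j}\!\bigl[\E[y^{(j)} \mid h_j]\bigr] \;=\; \E_{h_j}[p_{h_j}(\vx)] \;=\; p^*(\vx),
\]
where the last equality is the definition of $p^*(\vx)$ given earlier in this appendix. Linearity of expectation across the $m$ terms of $\hat p_m(\vx)$ then immediately yields $\E[\hat p_m(\vx)] = p^*(\vx)$.

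For the variance, mutual independence across $j$ gives
\[
  \Var[\hat p_m(\vx)] \;=\; \frac{1}{m^2}\sum_{j=1}^m \Var[y^{(j)}] \;=\; \frac{\Var[y^{(1)}]}{m}.
\]
The key observation is that, although $y^{(1)}$ is generated as a mixture of Bernoullis, it is itself a $\{0,1\}$-valued random variable with $\Pr[y^{(1)}=1] = \E[y^{(1)}] = p^*(\vx)$, so marginally $y^{(1)} \sim \mathrm{Bernoulli}(p^*(\vx))$ and $\Var[y^{(1)}] = p^*(\vx)(1-p^*(\vx))$. An equivalent derivation uses the law of total variance to decompose
\[
  \Var[y^{(1)}] \;=\; \E\bigl[p_{h_1}(\vx)(1-p_{h_1}(\vx))\bigr] \;+\; \Var\bigl[p_{h_1}(\vx)\bigr],
\]
so that the within-annotator noise and the between-annotator disagreement recombine, after expanding $\Var[p_{h_1}(\vx)] = \E[p_{h_1}(\vx)^2] - (p^*(\vx))^2$, into the same closed form $p^*(\vx)(1-p^*(\vx))$.

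There is no real obstacle in this argument; the only conceptual point worth flagging is that the hierarchical mixture collapses to a single Bernoulli marginal on each label, so the $1/m$ rate holds regardless of the underlying heterogeneity of $p_h(\vx)$ across the annotator population.
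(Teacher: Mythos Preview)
Your proposal is correct and follows essentially the same approach as the paper: tower property for unbiasedness, then the observation that each $y^{(j)}$ is marginally $\mathrm{Bernoulli}(p^*(\vx))$ so that $\Var[\hat p_m(\vx)]=\Var[y^{(1)}]/m=p^*(\vx)(1-p^*(\vx))/m$. Your additional law-of-total-variance decomposition is a nice extra but not needed, since the marginal-Bernoulli argument already gives the variance directly.
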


\begin{proof}
By the law of total expectation, $\E[y^{(j)}\mid\vx]=\E_h[p_h(\vx)]=p^*(\vx)$, so $\E[\hat p_m(\vx)]=p^*(\vx)$. Since the $y^{(j)}$ are i.i.d.\ Bernoulli with mean $p^*(\vx)$ (marginalizing over $h$),
$\Var(\hat p_m(\vx))=\Var(y^{(1)})/m = p^*(\vx)(1-p^*(\vx))/m$.
\end{proof}

\subsection{Proof of Proposition~\ref{prop:unified-consistency}}
\label{app:proposition_proof}

We first recall the setup. For $(\mX,Y)\sim\mathcal{D}$ with $Y\in\{0,1\}$ and
\(
Y\mid \mX=\vx \sim \mathrm{Bernoulli}(p^*(\vx)),
\)
the policy $\vpi_\vtheta(\vtau\mid \vx)$ emits a token sequence $\vtau$ intended to encode a numeric probability. A deterministic parser
\( g:\mathcal{T}\to[0,1]\cup\{\bot\} \)
returns either a number $p\in[0,1]$ or the unparsable symbol $\bot$. Let
\( s_\vtheta(\vx)=\Pr_{\vtau\sim\vpi_\vtheta(\cdot\mid \vx)}[g(\vtau)\neq\bot] \).
The piecewise rewards are:
\[
\begin{aligned}
R_{\mathrm{Brier}}(\vtau;y)
&=
\begin{cases}
1-(p-y)^2, & g(\vtau)=p\in[0,1],\\
0, & g(\vtau)=\bot,
\end{cases}
\\[4pt]
R_{\mathrm{Log}}(\vtau;y)
&=
\begin{cases}
y\log p' + (1-y)\log(1-p'), & g(\vtau)=p\in[0,1],\\
\log\epsilon, & g(\vtau)=\bot,
\end{cases}
\end{aligned}
\]

with $p'=\mathrm{clip}(p,\epsilon,1-\epsilon)$ and $\epsilon\in(0,\tfrac12)$. The population objective is
\( J_R(\vtheta)=\E_{(\mX,Y)}\,\E_{\vtau\sim\vpi_\vtheta(\cdot\mid \mX)}[R(\vtau;Y)]. \)

Fix $\vx$ and abbreviate $p^*=p^*(\vx)$. All statements below are conditional on $\mX=\vx$ and the conclusion holds for almost every $\vx$ (w.r.t.\ the marginal of $\mX$).

This proof utilizes the following observations:
(i) for any random variable $Z$ with finite variance and any constant $a$, $\E[(Z-a)^2]=(\E [Z]-a)^2+\Var(Z)$;
(ii) the function $\phi(p)=p^*\log p+(1-p^*)\log(1-p)$ is strictly concave on $p\in(\epsilon,1-\epsilon)$ with unique maximizer at $p=p^*$ (and at the boundary when $p^*\notin(\epsilon,1-\epsilon)$).

\paragraph{Brier.}
Let $P$ denote the random numeric report $g(\vtau)$ conditional on $g(\vtau)\neq\bot$ (so $P$ is defined with probability $s_\vtheta(\vx)$). Then
\begin{align*}
\E_{\vtau,Y}\!\left[R_{\mathrm{Brier}}(\vtau;Y)\mid \vx\right]
&= s_\vtheta(\vx)\,\E_{\vtau}\!\left[\,\E_{Y\mid \vx}\bigl[1-(P-Y)^2\bigr]\right] + \bigl(1-s_\vtheta(\vx)\bigr)\cdot 0\\
&= s_\vtheta(\vx)\,\E_{\vtau}\!\left[1-\,\E_{Y\mid \vx}\bigl[(Y-P)^2\bigr]\right] \\
&= s_\vtheta(\vx)\,\E_{\vtau}\!\left[1-\,\left(\E_{Y\mid \vx}[Y]-P\right)^2-\Var_{Y\mid \vx}(Y)\right] && \text{applying (i)} \\
&= s_\vtheta(\vx)\,\E_{\vtau}\!\left[1-\,\left(p^*-P\right)^2-p^*(1-p^*)\right] \\ %
&= s_\vtheta(\vx)\,\Bigl( 1 - \E_{\vtau}\!\left[(P-p^*)^2\right] - p^*(1-p^*) \Bigr) \\
&= s_\vtheta(\vx)\,\Bigl( 1 - \bigl[(\E_{\vtau}[P] - p^*)^2 + \Var_{\vtau}(P)\bigr] - p^*(1-p^*) \Bigr). && \text{applying (i)}
\end{align*}
For fixed $s_\vtheta(\vx)$ this is maximized by setting $\Var(P)=0$ (deterministic numeric output) and $\E[P]=p^*$ (truthful reporting). Moreover, since
$1-p^*(1-p^*) > 0$, the expectation is positive when $\Var(P)=0$ and $\E[P]=p^*$. Increasing $s_\vtheta(\vx)$ strictly increases the expectation; hence an optimizer satisfies $s_\vtheta(\vx)=1$. Therefore, at any global maximizer,
$P$ is almost surely constant and equals $p^*$, i.e., $p_\vtheta(\vx)=p^*$.

\paragraph{Log with clipping.}
Write $P'$ for the clipped numeric report when parsable. Then
\[
\begin{aligned}
\E_{\vtau,Y}\!\left[R_{\mathrm{Log}}(\vtau;Y)\mid \vx\right]
&= s_\vtheta(\vx)\,\E_{\vtau}\!\left[\,\E_{Y|\vx}\left[Y\log P' + (1-Y)\log(1-P')\right]\,\right] + (1-s_\vtheta(\vx))\,\log\epsilon \\
&= s_\vtheta(\vx)\,\E_{\vtau}\!\left[\,p^*\log P' + (1-p^*)\log(1-P')\,\right] + (1-s_\vtheta(\vx))\,\log\epsilon \\
&= s_\vtheta(\vx)\,\E_{\vtau}[\phi(P')] + (1-s_\vtheta(\vx))\log\epsilon .
\end{aligned}
\]
By strict concavity of $\phi$ and Jensen's inequality,
\(
\E[\phi(P')] \le \phi(\E [P'])
\)
with equality iff $P'$ is almost surely constant (deterministic numeric output). The maximizer over $P'\in[\epsilon,1-\epsilon]$ is uniquely
\(
P'\equiv \mathrm{clip}(p^*,\epsilon,1-\epsilon)
\)
(truthful reporting).
Finally, at this maximizer
\(
\phi\!\bigl(\mathrm{clip}(p^*,\epsilon,1-\epsilon)\bigr) > \log\epsilon,
\)
so allocating any mass to unparsable outputs (which yields $\log\epsilon$) strictly reduces the expectation; hence $s_\vtheta(\vx)=1$.
Therefore, the optimal report is deterministic and equals the clipped truth $\mathrm{clip}(p^*,\epsilon,1-\epsilon)$.

\smallskip
Combining the two cases completes the proof. \hfill$\square$

\section{Implementation Details}
\label{sec:appendix_impl_details}

\subsection{Evaluation Details}

We measure the probabilistic autorater's performance by comparing its judgment to human annotation or discretized probabilistic labels annotated by Gemini-2.5-Flash.
Following prior work~\citep{wang2024pandalm,zhu2025judgelm}, we formulate the pairwise judgment task as a three-class classification problem with labels $A \succ B$, Tie, and $B \succ A$. Let $TP_i$, $FP_i$, and $FN_i$ denote the true positives, false positives, and false negatives for class $i$, respectively. We report macro-averaged metrics by computing each score per class and then averaging over all $C$ classes:
\begin{align*}
\text{Precision}_{\text{macro}} &= \frac{1}{C} \sum_{i=1}^C \frac{TP_i}{TP_i + FP_i}, \\
\text{Recall}_{\text{macro}} &= \frac{1}{C} \sum_{i=1}^C \frac{TP_i}{TP_i + FN_i}, \\
\text{F1-score}_{\text{macro}} &= \frac{1}{C} \sum_{i=1}^C 
   \frac{2 \cdot TP_i}{2 \cdot TP_i + FP_i + FN_i}, \\
\text{Agreement} &= \frac{\sum_{i=1}^{C} TP_i}{\sum_{i=1}^{C} (TP_i + FN_i)}.
\end{align*}

By convention, calibration is measured with respect to a set of discrete (binary) labels.
To evaluate model calibration, we first binarize the preference distribution to obtain ground truth preference labels and then measure the Expected Calibration Error (ECE) and Brier Score.
Test samples with ground truth label being ``Tie'' are skipped for calibration evaluation.

The ECE is calculated by dividing the confidence into $K$ equal-sized bins (each of size $\tfrac{1}{K}$), and then calculating the accuracy and average confidence within each bin:
\begin{align*}
\mathrm{ECE}
  &= \sum_{k=1}^{K} \frac{|B_k|}{N}\,\bigl| \mathrm{Acc}(B_k) - \mathrm{Conf}(B_k) \bigr|, \\
\mathrm{Acc}(B_k)
  &= \frac{1}{|B_k|} \sum_{i \in B_k} \mathbf{1}\!\left(\hat{y}_i = y_i\right), \qquad
\mathrm{Conf}(B_k)
  = \frac{1}{|B_k|} \sum_{i \in B_k} \hat{p}_i,
\end{align*}
where $B_k$ is the number of samples whose prediction confidence falls into the interval $(\tfrac{k-1}{K}, \tfrac{k}{K}]$, $\hat{y}_i$ and $y_i$ are the predicted and true preference labels, and $\hat{p}_i$ is the predicted probability.
By default, we set $K=10$ in our experiments.

The Brier score is calculated as $\tfrac{1}{N}\sum_{i=1}^N (y_i - \hat{p}_i)^2$.

\subsection{Finetuning Details}

In Setting 2, we optimize the following GRPO objective~\citep{shao2024deepseekmath}
\[
\mathcal{J}(\vtheta)
= \mathbb{E}_{\vx \sim p(\vx),\vtau \sim \vpi_{\vtheta_{\mathrm{old}}}}
\!\left[
\min\!\left(
\frac{\vpi_\vtheta(\vtau|\vx)}{\vpi_{\vtheta_{\mathrm{old}}}(\vtau|\vx)}\, A_t,\;
\operatorname{clip}\!\left(\frac{\vpi_\vtheta(\vtau|\vx)}{\vpi_{\vtheta_{\mathrm{old}}}(\vtau|\vx)},\, 1-\varepsilon,\, 1+\varepsilon\right) A_t
\right)
- \beta\, D_{\mathrm{KL}}\!\left(\vpi_\vtheta \,\|\, \vpi_{\mathrm{ref}}\right)
\right],
\]
with the unbiased KL estimator~\citep{schulman2020approximating}
\[
D_{\mathrm{KL}}\!\left(\vpi_\vtheta \,\|\, \vpi_{\mathrm{ref}}\right)
\approx
\frac{\vpi_{\mathrm{ref}}(\vtau|\vx)}{\vpi_\vtheta(\vtau|\vx)}
- \log\!\frac{\vpi_{\mathrm{ref}}(\vtau|\vx)}{\vpi_\vtheta(\vtau|\vx)}
- 1,
\]

where \(\vpi_\vtheta\) is the policy model being optimized,
\(\vpi_{\vtheta_{\mathrm{old}}}\) is the previous policy model,
\(\vpi_{\mathrm{ref}}\) is the reference policy,
\(A_t\) is the advantage estimate,
\(\varepsilon\) is the clipping hyperparameter,
and \(\beta\) is the KL penalty coefficient.
Detailed parameter settings for our experiments are presented in Table~\ref{tab:finetune_setting}.

\begin{table}[h]
\centering
\caption{Detailed finetuning settings.}
\label{tab:finetune_setting}
\small
\begin{tabular}{@{}l p{0.45\linewidth}@{}}
\toprule

\multicolumn{2}{@{}l}{\textbf{Setting 1 — Supervised Finetuning (SFT)}} \\
\midrule
\underline{General} & \\
\quad max total sequence length & 2048 \\
\quad precision & bf16 \\
\underline{Optimization} & \\
\quad optimizer & AdamW~\citep{loshchilov2019decoupled} \\
\quad optimizer hyperparameters & $\beta_1,\beta_2,\epsilon = 0.9,\,0.98,\,10^{-5}$ \\
\quad weight decay & 0.1 \\
\quad batch size & 32 \\
\quad training epochs & 3 \\
\quad learning rate & $1\times 10^{-6}$ \\

\midrule
\multicolumn{2}{@{}l}{\textbf{Setting 2 — Reinforcement Learning (RL)}} \\
\midrule
\underline{General} & \\
\quad max total sequence length & 2048 \\
\quad precision & bf16 \\
\underline{GRPO Setting} & \\
\quad hyperparameters & $\beta=0.01,\varepsilon=0.2$ \\
\quad number of prompts per step & 32 (Gemma) / 64 (Qwen) \\
\quad number of generations per prompt & 32 \\
\underline{Optimization} & \\
\quad optimizer & AdamW~\citep{loshchilov2019decoupled} \\
\quad optimizer hyperparameters & $\beta_1,\beta_2,\epsilon = 0.9,\,0.999,\,10^{-8}$ \\
\quad weight decay & 0.001 \\
\quad batch size & 512 \\
\quad training epochs & 1 \\
\quad learning rate & $3\times 10^{-7}$ \\
\quad learning rate warm-up & linear \\ 
\quad warm-up ratio / steps & 0.01 \,/\, 50 \\
\bottomrule
\end{tabular}
\end{table}

\section{Additional Results}

\subsection{Win Rate Prediction}
We use a subset of the LMSys Chatbot Arena Conversation dataset~\citep{zheng2023judging,kahng2025comparator} to evaluate the autorater's ability to predict the LM's win rate.
This subset contains a total of 900 prompts for comparing responses from Gemma 1.0 and Gemma 1.1.
For this evaluation, we use Qwen-2.5-7B as the base model for the autorater to avoid potential self-enhancement bias (i.e., favoring responses generated by LMs from the same family).
Specifically, given two LMs $\vpi_A$ and $\vpi_B$, the win rate (of $\vpi_B$) is defined as $$\Pr[\vpi_B\succ \vpi_A]\coloneqq \mathbb{E}_{\vx \sim p(\vx)}\mathbb{E}_{\vtau_A\sim \vpi_A, \vtau_B\sim \vpi_B} \Pr[\vtau_B \succ \vtau_A|\vx].$$
From the results shown in Table~\ref{tab:win_rate}, we observe that the predicted win rates from the finetuned autoraters are more aligned with the true win rate voted by human judges.

\begin{table}[h]
\centering
\caption{Evaluation on the Chatbot Arena Conversations dataset for comparing Gemma 1.0 to Gemma 1.1. We compare the autorater's predicted win rate for Gemma 1.0 against the true win rate from human annotators.
}
\label{tab:win_rate}
\sisetup{detect-weight, mode=text}
\renewrobustcmd{\bfseries}{\fontseries{b}\selectfont}
\small
\begin{tabular}{@{}l S[table-format=1.4] S[table-format=1.4]@{}}
\toprule
& {\textbf{Win Rate}} & {\textbf{Absolute Error}} \\
\textbf{Method} & {\textbf{of Gemma 1.0}} & {\textbf{to Human}} \\
\midrule
Human & 0.4344 & \multicolumn{1}{c}{---} \\
\midrule
Qwen-2.5-7B Verbal & 0.7397 & 0.3053 \\
Qwen-2.5-7B Verbal w/ CoT & 0.5951 & 0.1607 \\
Qwen-2.5-7B SFT & 0.3146 & 0.1198 \\
Qwen-2.5-7B SFT w/ CoT & 0.3082 & 0.1262 \\
Qwen-2.5-7B RL (Brier) & 0.3640 & 0.0704 \\
Qwen-2.5-7B RL (Log) & \bfseries 0.3662 & \bfseries 0.0682 \\
\bottomrule
\end{tabular}
\end{table}

\subsection{RL with Probabilistic Labels}
While our main experiments utilized sparse binary labels for RL, our reward function is also compatible with dense probabilistic labels. To explore how this data format affects performance, we finetuned Qwen-2.5-7B using the Brier score reward on the same 5K prompts (each with 10 annotations) used for SFT.

As shown in Figure~\ref{fig:data_format}, we compared this model against several baselines: the zero-shot model with direct verbalized confidence, the SFT model trained on probabilistic labels, and our primary model trained with RL on binary labels. The results indicate that RL with probabilistic labels improves both performance (agreement) and calibration (ECE) over the zero-shot baseline. However, it underperformed compared to the autoraters trained with a larger set of binary labels, highlighting the crucial role of data diversity in achieving optimal results.

\begin{figure}[ht]
    \centering
    \includegraphics[width=0.6\linewidth]{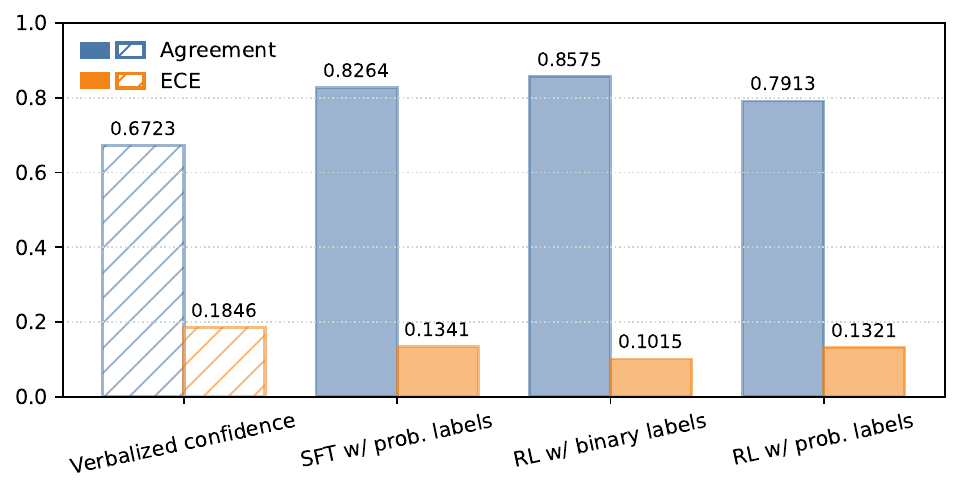}
    \caption{Result of RL finetuning with probabilistic labels.}
    \label{fig:data_format}
\end{figure}

\section{Discussions}

We provide a discussion on the extensions and limitations of our approach and results.

First, our work initiates the study on more reliable autoraters by predicting the full preference distribution. While our empirical results focused on the pairwise evaluation setting, the general framework and analysis also apply to the pointwise evaluation setting, where the preference takes nominal or ordinal values (e.g., Likert scale). Specifically, a direction extension involves employing multi-class strictly proper scoring rules, such as the multi-class Brier score or cross-entropy, as the reward function for RL. We leave a thorough empirical study in this space as future work.

Second, our finetuning objective aims to capture the distribution of human preferences, which stems from the human-level uncertainty on the subject. As such, the uncertainty communicated by the finetuned autorater in the form of verbalized confidence is rather aleatoric than epistemic, whereas the latter may require predicting a second-order distribution (i.e., a distribution over the predicted probability $p_\vtheta(\vx)$). However, such an approach may significantly complicate training; instead, we demonstrate empirically that simply providing a point estimation of the probability is sufficient to improve the autorater's alignment to the preference distribution while maintaining good out-of-distribution generalizability.

\section{Prompts}
\label{sec:appendix_prompt}

\begin{tcolorbox}[
    breakable,
    colback=gray!5!white,
    colframe=gray!60!black,
    fonttitle=\bfseries,
    title=Prompt Set for Persona-Based Preference Annotation,
    arc=2mm,
    boxrule=0.8pt,
    left=5mm,
    right=5mm,
]

\noindent\textbf{Base Prompt Structure:}
\begin{tcolorbox}[
    colback=black!5!white,
    boxrule=0.2pt,
    sharp corners,
    fontupper=\scriptsize\ttfamily,
]

\{Persona-specific instruction here\}

Your output must be a single letter, either 'A' or 'B', 
enclosed in \textless winner\textgreater\ and \textless/winner\textgreater\ tags.

[Question]
\{question\}

[The Start of Response A]
\{responseA\}
[The End of Response A]

[The Start of Response B]
\{responseB\}
[The End of Response B]

Considering these factors, which response is better?

\end{tcolorbox}

\hrulefill\medskip %

\noindent\textbf{Persona-Specific Instructions:}
\begin{description}
    \item[\textbf{1. Factual \& Technical Specialist}] {Your task is to evaluate two responses based on their helpfulness, relevance, accuracy, and level of detail. As a technical specialist, you would additionally consider \textbf{factual correctness and precision} in your judgment.}

    \item[\textbf{2. Safety \& Privacy Advocate}] Your task is to evaluate two responses based on their helpfulness, relevance, accuracy, and level of detail. As a safety advocate, you would additionally consider \textbf{safety and ethical responsibility}. An unsafe response is a poor response.
    
    \item[\textbf{3. Empathetic Communicator}] Judge the overall quality of the responses based on their helpfulness, accuracy, and detail. As an empathetic communicator, you would also pay close attention to the \textbf{tone and respectfulness} of the language.
    
    \item[\textbf{4. Concision Champion}] Assess which response is better, considering its helpfulness, accuracy, and relevance. As a champion of concision, you would additionally prefer the response that is more \textbf{direct and to-the-point}, penalizing unnecessary verbosity.
    
    \item[\textbf{5. Nuance \& Depth Seeker}] Evaluate the helpfulness, relevance, and accuracy of both responses. As an academic researcher, you would also give importance to the \textbf{depth and nuance} of the explanation.
    
    \item[\textbf{6. Creative \& Arts Critic}] Assess which response is more helpful, relevant, and accurate. As a creative critic, you would also weigh the \textbf{originality, style, and engagement} of the writing.
    
    \item[\textbf{7. Everyday User} (4x Weight)] From the perspective of a regular person, not an expert, evaluate which response is better overall. Your decision should be based on which one is most \textbf{helpful, relevant to the question, accurate, and provides a good level of detail}. The best response is the one that is most satisfying and clear.
    
\end{description}
\end{tcolorbox}

\begin{tcolorbox}[
    colback=black!5!white,   %
    colframe=black!75!white, %
    title=\textbf{Prompts for Eliciting Probabilistic Prediction},
    fonttitle=\bfseries,
    breakable,               %
    pad at break=2mm,
    fontupper=\ttfamily\small %
]

\subsection*{1. Verbalized Confidence}
\begin{lstlisting}
Your task is to evaluate two responses, A and B, based on the provided question. Compare the responses and determine the probability that **Response B is better than Response A**.

Your output must be a single float between 0.0 and 1.0, enclosed in `<prob_rb_better>` and `</prob_rb_better>` tags. Do not output any other text, reasoning, or explanations.
- A probability of 0.0 means Response A is clearly better.
- A probability of 1.0 means Response B is clearly better.
- A probability of 0.5 indicates they are of similar quality.

Example Output Format:
<prob_rb_better>0.85</prob_rb_better>

[Question]
{question}

[The Start of Response A]
{responseA}
[The End of Response A]

[The Start of Response B]
{responseB}
[The End of Response B]

Probability Output:
\end{lstlisting}

\hrule %

\subsection*{2. Verbalized Confidence w/ CoT}
\begin{lstlisting}
Your task is to evaluate two responses, A and B, based on the provided question. Compare the responses on helpfulness, accuracy, and clarity.

Follow these steps for your output:
1. First, provide a step-by-step reasoning of your evaluation inside `<think>` and `</think>` tags. Explain which response is better and why.
2. Immediately after the closing `</think>` tag, provide a single float between 0.0 and 1.0 representing the probability that **Response B is better than Response A**. This float must be enclosed in `<prob_rb_better>` and `</prob_rb_better>` tags. A probability of 0.0 means Response A is clearly better, 1.0 means Response B is clearly better, and 0.5 indicates they are of similar quality.

Example Output Format:
<think>Your detailed reasoning comparing A and B goes here.</think><prob_rb_better>0.85</prob_rb_better>

[Question]
{question}

[The Start of Response A]
{responseA}
[The End of Response A]

[The Start of Response B]
{responseB}
[The End of Response B]

Your Evaluation:
\end{lstlisting}

\hrule %

\subsection*{3. Self-Consistency}
\begin{lstlisting}
Your task is to evaluate two responses, A and B, based on the provided question. You must decide which response is superior.
First, provide a step-by-step thinking process that justifies your decision. This reasoning should be enclosed within <think></think> tags. 
After your reasoning, provide your final decision, which must be a single character ('A' or 'B'), enclosed within <answer></answer> tags.

Example Output Format:
<think>Response A directly answers the user's question about X, while Response B is more general. Response A also provides a concrete example which is helpful. Therefore, A is the better response.</think><answer>A</answer>

[Question]
{question}

[The Start of Response A]
{responseA}
[The End of Response A]

[The Start of Response B]
{responseB}
[The End of Response B]

Final Evaluation:
\end{lstlisting}

\hrule %

\subsection*{4. Logit-Based Confidence}
\begin{lstlisting}
Your task is to evaluate two responses, A and B, based on the provided question. 
You must decide which response is superior. Your output must be a single character: 'A' or 'B'. 
Do not output any other text, reasoning, or explanations.

Example Output Format:
B

[Question]
{question}

[The Start of Response A]
{responseA}
[The End of Response A]

[The Start of Response B]
{responseB}
[The End of Response B]

Final Decision (A or B):
\end{lstlisting}

\end{tcolorbox}

\section{Response Examples}

\subsection{JudgeLM Dataset}

\begin{autoraterbox}[\textcolor{blue}{Gemma-2-9B} (JudgeLM \emph{val} set - QID 827)]{blue}

\majoritem{Question}
Compose a haiku with the provided subject.
Spring

\medskip
\majoritem{Model A (vicuna-13b-1.1) Response}
\begin{rawblock}
Fresh new start, green leaves grow
Raindrops pitter-patter, flowers bloom.
\end{rawblock}

\majoritem{Model B (mix-instruct-output) Response}
\begin{rawblock}
Amidst blooming flowers
Gentle breeze moves budding trees
Spring awakens life.
\end{rawblock}

\medskip
\majoritem{Ground Truth}
$\displaystyle P(\text{B is better}) = 1.00$

\medskip
\majoritem{Autoraters Outputs}

\textbf{Verbalized Confidence w/ CoT (Prediction: 0.60)}
\begin{rawblock}
<think>Both responses follow the 5-7-5 syllable structure of a haiku. 

Response B is slightly better because it paints a more vivid picture with its imagery. "Amidst blooming flowers" and "Gentle breeze moves budding trees" create a stronger sense of the season's atmosphere compared to Response A's more general descriptions.  Response B also uses more evocative language like "awakens life," which adds a deeper layer of meaning. </think><prob_rb_better>0.6</prob_rb_better> 

\end{rawblock}
\textbf{SFT w/ CoT (Prediction: 1.00)}
\begin{rawblock}
<think>1.  **Analyze Response A:** Response A attempts to follow the 5-7-5 syllable structure but fails. "Fresh new start, green leaves grow" has 7 syllables, "Raindrops pitter-patter, flowers bloom" has 9 syllables. It also includes a line break within the second line, which is not a haiku characteristic.
2.  **Analyze Response B:** Response B correctly follows the 5-7-5 syllable structure for all three lines. It also uses evocative imagery and language appropriate for the subject "Spring."
3.  **Compare and Justify Score:** Response B is a perfect haiku, adhering to the strict structural and thematic requirements. Response A completely fails on the syllable count and line structure, making it not a haiku at all. Therefore, Response B is unequivocally better than Response A. A confidence score of 1.00 is warranted as there is no scenario where Response A would be preferred for this task.</think><prob_rb_better>1.0</prob_rb_better>
\end{rawblock}
\textbf{RL (Brier) (Prediction: 0.99)}
\begin{rawblock}
<think>Response B is a better haiku because it follows the traditional structure of a haiku (5-7-5 syllables per line) while Response A does not.  Response B also captures the essence of spring with its imagery of blooming flowers, gentle breezes, and awakening life. Response A's lines are too long and don't have the same poetic flow. </think><prob_rb_better>0.99</prob_rb_better> 

\end{rawblock}
\textbf{RL (Log) (Prediction: 1.00)}
\begin{rawblock}
<think>Response B is better because it adheres to the 5-7-5 syllable structure of a haiku, while Response A does not. Response B also uses more evocative language and imagery to capture the essence of spring. Response A is more of a descriptive paragraph than a haiku.</think><prob_rb_better>1.0</prob_rb_better>

\end{rawblock}

\end{autoraterbox}

\begin{autoraterbox}[\textcolor{blue}{Gemma-2-9B} (JudgeLM \emph{val} set - QID 3960)]{blue}

\majoritem{Question}
Tell me what are aspects of polytopes from given text
In geometry, a simplicial polytope is a polytope whose facets are all simplices. For example, a simplicial polyhedron in three dimensions contains only triangular faces and corresponds via Steinitz's theorem to a maximal planar graph.

They are topologically dual to simple polytopes. Polytopes which are both simple and simplicial are either simplices or two-dimensional polygons.

\medskip
\majoritem{Model A (mix-instruct-output) Response}
\begin{rawblock}
1. A simplicial polytope is a polytope in geometry where each facet is a simplice.
2. As an illustration, a simplicial polyhedron in three dimensions with only triangular faces corresponds to a maximum planar network according to Steinitz's theorem.
3. They are simple polytopes topological dual.
4. Two-dimensional polygons or simplices are polytopes that are both simple and simplicial.
\end{rawblock}

\majoritem{Model B (alpaca-native) Response}
\begin{rawblock}
The aspects of polytopes from given text are: a polytope whose facets are all simplices and topologically dual to simple polytopes. Polytopes which are both simple and simplicial are either simplices or two-dimensional polygons.
\end{rawblock}

\medskip
\majoritem{Ground Truth}
$\displaystyle P(\text{B is better}) = 0.90$

\medskip
\majoritem{Autoraters Outputs}

\textbf{Verbalized Confidence w/ CoT (Prediction: 0.70)}
\begin{rawblock}
<think>Both responses attempt to summarize the aspects of polytopes from the provided text. 

Response A breaks down the information into numbered points, making it slightly easier to read and understand. However, it contains some inaccuracies:

* Point 3 is incomplete and grammatically incorrect. It should read "They are topologically dual to simple polytopes."
* Point 2 uses the phrase "maximum planar network" which is likely a typo and should be "maximal planar graph."

Response B, while less structured, accurately summarizes the key aspects of polytopes mentioned in the text. It is concise and avoids the errors present in Response A. </think><prob_rb_better>0.7</prob_rb_better>

\end{rawblock}
\textbf{SFT w/ CoT (Prediction: 0.40)}
\begin{rawblock}
<think>1.  **Analyze the Question:** The user wants to extract "aspects of polytopes" from the provided text. This implies identifying key definitions, properties, or relationships related to polytopes.

2.  **Evaluate Response A:**
    *   **Helpfulness & Relevance:** Response A attempts to list aspects. It correctly identifies "simplicial polytope" and "triangular faces." However, it includes "maximum planar network" instead of "maximal planar graph," which is a minor factual error. It also lists "topological dual" and "two-dimensional polygons or simplices."
    *   **Coherence & Quality:** Response A is presented as a numbered list, which is a good format for extracting distinct points. The language is clear and directly quotes or paraphrases the text.

3.  **Evaluate Response B:**
    *   **Helpfulness & Relevance:** Response B attempts to list aspects. It correctly identifies "a polytope whose facets are all simplices" and "topologically dual to simple polytopes." It also includes "Polytopes which are both simple and simplicial are either simplices or two-dimensional polygons."
    *   **Coherence & Quality:** Response B is a single, concise sentence. While it successfully condenses the information, it lacks the explicit separation of distinct aspects that a numbered list provides. It also uses the phrase "The aspects of polytopes from given text are:" which is a bit redundant given the question.

4.  **Compare and Justify Score (0.40):**
    *   Response A is slightly better because it presents the information in a more structured and easily digestible format (a numbered list). This makes it clearer that each point is a distinct "aspect."
    *   Response B is more concise but less organized. Its single sentence format makes it harder to parse as a list of distinct aspects.
    *   The minor factual error in Response A ("maximum planar network" instead of "maximal planar graph") is not significant enough to outweigh the superior presentation and clarity of Response A.
    *   Therefore, Response A is marginally better, leading to a confidence score of 0.40 (meaning B is better with 40%
\end{rawblock}
\textbf{RL (Brier) (Prediction: 0.98)}
\begin{rawblock}
<think>Response B is better because it directly answers the question in a concise and clear manner. It identifies the key aspects of polytopes mentioned in the text without unnecessary elaboration or rephrasing. Response A, while accurate, provides a more verbose and less focused summary of the text. </think><prob_rb_better>0.98</prob_rb_better> 

\end{rawblock}
\textbf{RL (Log) (Prediction: 0.65)}
\begin{rawblock}
<think>Response B is better because it directly answers the question by extracting and summarizing the key aspects of polytopes mentioned in the text. Response A, while accurate, simply rephrases the information from the text without providing a clear and concise summary. Response B is more focused and highlights the essential concepts related to polytopes. </think><prob_rb_better>0.65</prob_rb_better>

\end{rawblock}

\end{autoraterbox}

\begin{autoraterbox}[\textcolor{forestgreen}{Qwen-2.5-7B} (JudgeLM \emph{val} set - QID 376)]{green}

\majoritem{Question}
What is an anemone?

\medskip
\majoritem{Model A (alpaca-native) Response}
\begin{rawblock}
An anemone is a type of sea creature that is typically found in marine habitats. It is typically described as a large, beautiful, and often colorful animal that is found on coral reefs. Anemones are typically classified as either solitary or colonial, depending on the species. Solitary anemones are typically smaller and are found in shallow waters, while colonial anemones are larger and can be found in deeper waters. The anemone uses its stinging cells to capture prey, and is also known to benefit from the symbiotic relationship it has with certain types of algae.
\end{rawblock}

\majoritem{Model B (vicuna-13b-1.1) Response}
\begin{rawblock}
An anemone is a type of marine animal that belongs to the phylum Cnidaria and the class Anthozoa. It is a multicellular, radially symmetrical animal that has a simple, tubular body and a single, central oral-aboral opening surrounded by tentacles. Anemones are found in coral reefs, on the seafloor, and in shallow waters. They are predators, feeding on a variety of small sea creatures. An anemone's color can range from brown to red, orange, or purple.
\end{rawblock}

\medskip
\majoritem{Ground Truth}
$\displaystyle P(\text{B is better}) = 0.70$

\medskip
\majoritem{Autoraters Outputs}

\textbf{Verbalized Confidence w/ CoT (Prediction: 0.95)}
\begin{rawblock}
<think>
Response A provides a basic description of an anemone, mentioning its habitat and classification into solitary or colonial types. However, it lacks detail about the anemone's physical characteristics and behavior. Response B, on the other hand, gives a more detailed and scientifically accurate description, mentioning the phylum and class of anemones, their physical structure, and feeding habits. It also provides a range of colors, which is more specific and helpful for understanding the anemone's appearance. Therefore, Response B is more accurate and detailed.
</think>
<prob_rb_better>0.95</prob_rb_better>
\end{rawblock}
\textbf{SFT w/ CoT (Prediction: 0.90)}
\begin{rawblock}
<think>1.  **Analyze Response A:** Response A describes an anemone as a "large, beautiful, and often colorful animal." This is a significant factual error. Anemones are typically small, often less than a few inches in diameter, and not generally described as "large." This misrepresentation of size is a major flaw.
2.  **Analyze Response B:** Response B provides a more accurate and detailed description. It correctly identifies anemones as belonging to the phylum Cnidaria and class Anthozoa, and describes their physical characteristics (multicellular, radially symmetrical, tubular body, tentacles, oral-aboral opening). It also correctly states their habitat and feeding habits.
3.  **Compare Helpfulness & Relevance:** Response B is far more helpful and relevant due to its accuracy. Response A's fundamental error regarding size makes it misleading and unhelpful.
4.  **Compare Coherence & Quality:** Response B is well-structured, clear, and accurate. Response A contains a critical factual error that undermines its quality.
5.  **Justify Score (0.90):** Response B is significantly better because it provides accurate and comprehensive information, while Response A contains a major factual error regarding the size of an anemone. The error in Response A makes it largely unhelpful and potentially misleading. Therefore, Response B is overwhelmingly superior.</think><prob_rb_better>0.9</prob_rb_better>
\end{rawblock}
\textbf{RL (Brier) (Prediction: 0.99)}
\begin{rawblock}
<think>
Response A is less accurate and less clear. It incorrectly describes anemones as "beautiful" and "large," which are not typical descriptors for anemones. It also incorrectly states that solitary anemones are typically smaller and found in shallow waters, while colonial anemones are larger and found in deeper waters, which is not a general rule. Response B, on the other hand, provides a more accurate and clear description, mentioning that anemones belong to the phylum Cnidaria and class Anthozoa, and correctly describes their physical characteristics and habitat. It also correctly states that anemones are predators and that their color can vary.

Given these differences, Response B is clearly better.
</think><prob_rb_better>0.99</prob_rb_better>
\end{rawblock}
\textbf{RL (Log) (Prediction: 1.00)}
\begin{rawblock}
<think>
Response A is less accurate and less clear. It incorrectly describes anemones as "beautiful" and "large," which is not a defining characteristic. It also incorrectly states that solitary anemones are typically smaller and found in shallow waters, while colonial anemones are larger and found in deeper waters, which is not a general rule. Response B, on the other hand, provides a more precise and accurate description, including the scientific classification and the correct habitat and feeding habits. It also correctly mentions the range of colors anemones can have.

</think><prob_rb_better>1.0</prob_rb_better>
\end{rawblock}

\end{autoraterbox}

\begin{autoraterbox}[\textcolor{forestgreen}{Qwen-2.5-7B} (JudgeLM \emph{val} set - QID 3311)]{green}

\majoritem{Question}
How can I get rid of a rash on my neck?

\medskip
\majoritem{Model A (moss-moon-003-sft) Response}
\begin{rawblock}
There are several ways to get rid of a rash on your neck. Here are some options:

1. Try over-the-counter creams: Over-the-counter creams like hydrocortisone cream or chapstick can help soothe and moisturize the rash. Be sure to read the labels carefully and follow the instructions.

2. Use aloe vera: Aloe vera juice can help soothe and moisturize the skin, making it less itchy. Mix one tablespoon of aloe vera gel with one tablespoon of water and apply to the rash.

3. Use tea tree oil: Tea tree oil has anti-inflammatory properties that can help reduce inflammation and itching. Apply a few drops of tea tree oil to the rash and leave it on for 30 minutes before rinsing it off.

4. Get a rash shot: In some cases, your doctor may recommend a rash shot to help reduce the itching and inflammation. This is a prescription medication that you should consult your doctor about before taking.

5. Try home remedies: There are several home remedies that can help soothe a rash on your neck. For example, you can try applying cucumber slices to the rash to help
\end{rawblock}

\majoritem{Model B (oasst-sft-4-pythia-12b-epoch-3.5) Response}
\begin{rawblock}
You should see an allergist to get tested for allergies.
\end{rawblock}

\medskip
\majoritem{Ground Truth}
$\displaystyle P(\text{B is better}) = 0.50$

\medskip
\majoritem{Autoraters Outputs}

\textbf{Verbalized Confidence w/ CoT (Prediction: 0.15)}
\begin{rawblock}
<think>
Response A provides a detailed list of methods to alleviate a rash on the neck, including over-the-counter creams, aloe vera, tea tree oil, rash shots, and home remedies. It offers a comprehensive approach and is helpful and clear for someone dealing with a rash. Response B, on the other hand, suggests seeing an allergist, which is a more specific and targeted approach but does not provide immediate relief or practical steps for managing the rash. While Response B is accurate in suggesting a professional consultation, it lacks the practical, immediate advice that Response A provides.

Given the context of the question, which asks for ways to get rid of a rash, Response A is more helpful and comprehensive. However, Response B is still accurate and could be a valuable follow-up step after trying the methods in Response A.
</think>
<prob_rb_better>0.15</prob_rb_better>
\end{rawblock}
\textbf{SFT w/ CoT (Prediction: 0.90)}
\begin{rawblock}
<think>1.  **Analyze Response A:** Response A provides a list of potential solutions, including OTC creams, aloe vera, tea tree oil, and even a "rash shot." While some of these suggestions (like tea tree oil) are generally safe for topical use, others (like "rash shot") are vague and potentially misleading without proper medical consultation. The response also cuts off abruptly.
2.  **Analyze Response B:** Response B suggests seeing an allergist. This is a highly relevant and potentially crucial step for diagnosing and treating a rash, especially if it's allergy-related. It prioritizes professional medical advice.
3.  **Compare Helpfulness & Relevance:** Response B is significantly more helpful and relevant because it points the user towards a professional diagnosis, which is often the first and most important step in treating a rash. Response A's suggestions, while not inherently harmful, are less direct and potentially less effective without a proper diagnosis.
4.  **Compare Coherence & Quality:** Response A is a bit disjointed due to the abrupt ending and the inclusion of potentially confusing or vague advice (e.g., "rash shot"). Response B is concise and directly addresses the underlying cause, making it more coherent and actionable.
5.  **Justify Score (0.90):** Response B is overwhelmingly better because it prioritizes professional medical advice, which is the most responsible and effective way to address a rash. Response A's suggestions, while not actively harmful, are less direct and potentially less effective without a proper diagnosis. The score of 0.90 reflects that B is significantly superior due to its focus on professional guidance.</think><prob_rb_better>0.9</prob_rb_better>
\end{rawblock}
\textbf{RL (Brier) (Prediction: 0.75)}
\begin{rawblock}
<think>
Response A provides a detailed list of methods to treat a rash on the neck, including over-the-counter creams, aloe vera, tea tree oil, and home remedies. However, it is quite lengthy and includes some less conventional methods like "rash shots" which are not commonly used for rashes and could be misleading. The response is clear but could be more concise.

Response B, on the other hand, suggests seeing an allergist, which is a more direct and professional approach. This is particularly useful if the rash is due to an allergic reaction. However, it doesn't provide any specific treatment options and is quite brief.

Given the question, Response A offers more comprehensive information, but Response B provides a more targeted and professional approach. Since the question asks for a specific recommendation, Response B is more likely to be helpful for the user.

Considering the above, Response B is better, but not by a wide margin.
</think><prob_rb_better>0.75</prob_rb_better>
\end{rawblock}
\textbf{RL (Log) (Prediction: 0.00)}
\begin{rawblock}
<think>
Response A provides a detailed list of methods to treat a rash on the neck, including over-the-counter creams, aloe vera, tea tree oil, and home remedies. It is clear and offers multiple options, making it helpful for the user. Response B, on the other hand, suggests seeing an allergist, which is a single action and does not provide specific treatment options. While it might be a valid step in some cases, it does not directly address the user's question about how to get rid of the rash.

Given the comprehensive nature and directness of Response A, it is more helpful and clearer for the user's needs.
</think>
<prob_rb_better>0.0</prob_rb_better>
\end{rawblock}

\end{autoraterbox}

\subsection{PandaLM Dataset}

\begin{autoraterbox}[\textcolor{blue}{Gemma-2-9B} (PandaLM \emph{test} set - QID 412)]{blue}

\majoritem{Question}
Instruction: Rewrite the given text and correct grammar, spelling, and punctuation errors.

Input: If you'd told me year ago that today I would finish a marathon, I would of laughed. Your support had a huge affect on me!

\medskip
\majoritem{Model A (bloom-7b) Response}
\begin{rawblock}
If you told me a year ago that today I would finish a marathon, I would have laughed. Your support had a huge effect on me!
\end{rawblock}

\majoritem{Model B (cerebras-gpt-6.7B) Response}
\begin{rawblock}
If you had told me a year ago that today I would finish a marathon, I would have laughed. Your support had a huge impact on me.
\end{rawblock}

\medskip
\majoritem{Ground Truth}
Preference: B better

\medskip
\majoritem{Autoraters Outputs}

\textbf{Verbalized Confidence w/ CoT (Prediction: 0.60)}
\begin{rawblock}
<think>Response B is slightly better because it correctly uses the past perfect tense ("had told") which is more grammatically accurate in this context. Response A uses the simple past tense ("told") which is less precise. Additionally, Response B uses the word "impact" which is a more precise synonym for "affect".</think><prob_rb_better>0.6</prob_rb_better>

\end{rawblock}
\textbf{SFT w/ CoT (Prediction: 0.60)}
\begin{rawblock}
<think>1.  **Identify Errors in Input:** The input contains two main errors: "would of" should be "would have," and "affect" should be "effect" or "impact."
2.  **Evaluate Response A:** Response A correctly changes "would of" to "would have" and "affect" to "effect."
3.  **Evaluate Response B:** Response B also correctly changes "would of" to "would have" and "affect" to "impact."
4.  **Compare Responses:** Both responses correctly identify and fix the grammatical errors. However, Response B uses "impact" instead of "effect," which is a more common and widely accepted synonym for "affect" in this context (meaning influence). While "effect" is also grammatically correct, "impact" is arguably a slightly more precise and natural fit for the meaning of "huge affect."
5.  **Justify Score:** Response B is marginally better due to its choice of "impact" over "effect," making the corrected sentence slightly more precise and natural. The difference is subtle, but it gives B a slight edge in quality. Therefore, a confidence score of 0.60 is appropriate, indicating B is slightly better.</think><prob_rb_better>0.6</prob_rb_better>
\end{rawblock}
\textbf{RL (Brier) (Prediction: 0.98)}
\begin{rawblock}
<think>Response B is better because it correctly uses the past perfect tense ("had told") and the word "impact" instead of "affect," which is used incorrectly in the original text. Response A is grammatically correct but uses the incorrect past perfect tense ("told") and the word "effect" which is not the best choice in this context. </think><prob_rb_better>0.98</prob_rb_better>

\end{rawblock}
\textbf{RL (Log) (Prediction: 1.00)}
\begin{rawblock}
<think>Response B is better because it makes more accurate grammatical corrections. 

*  Response A correctly changes "would of" to "would have" but keeps "affect" which should be "effect". 
* Response B correctly changes "would of" to "would have" and "affect" to "impact". 

Response B is also slightly more natural in its phrasing. </think><prob_rb_better>0.999</prob_rb_better>

\end{rawblock}

\end{autoraterbox}

\begin{autoraterbox}[\textcolor{forestgreen}{Qwen-2.5-7B} (PandaLM \emph{test} set - QID 573)]{green}

\majoritem{Question}
Instruction: Predict how many stars the author will give to the restaurant from a Yelp review.

Input: The reviews were great, but honestly i felt it was just ok. Seemed like a typical tourist spot were the food was mediocre. The service was ok, not great, but not the worst.
The bread was very good. I ordered the muscle fries. They had a sweet Vidalia onion flavor. No need for fries bc they were soggy wo much flavor. My boyfriend ordered a pesto chicken pasta and it was again mediocre.

\medskip
\majoritem{Model A (cerebras-gpt-6.7B) Response}
\begin{rawblock}
5 stars
\end{rawblock}

\majoritem{Model B (opt-7b) Response}
\begin{rawblock}
The author will likely give the restaurant 3 stars.
\end{rawblock}

\medskip
\majoritem{Ground Truth}
Preference: B better

\medskip
\majoritem{Autoraters Outputs}

\textbf{Verbalized Confidence w/ CoT (Prediction: 0.95)}
\begin{rawblock}
<think>
Response A simply states "5 stars" without any reasoning or context, which is not helpful or accurate given the mixed feedback in the review. Response B, on the other hand, provides a more reasoned prediction based on the content of the review. The review mentions that the experience was "just ok" and that the food and service were "mediocre," which suggests a below-average experience. Therefore, a 3-star rating seems more appropriate than 5 stars.

Response B is clearer and more aligned with the sentiment expressed in the review.
</think>
<prob_rb_better>0.95</prob_rb_better>
\end{rawblock}
\textbf{SFT w/ CoT (Prediction: 1.00)}
\begin{rawblock}
<think>1.  **Analyze the Input:** The review is mixed. It starts with "The reviews were great, but honestly i felt it was just ok." This indicates a neutral to slightly negative overall impression. It then lists specific criticisms (mediocre food, mediocre service, soggy fries) and positive points (good bread, sweet Vidalia onion flavor). The language is generally unenthusiastic.
2.  **Evaluate Response A:** Response A predicts "5 stars." This is highly unlikely given the content of the review. The reviewer explicitly states it was "just ok" and lists multiple mediocre aspects. A 5-star rating would imply strong satisfaction, which is not evident.
3.  **Evaluate Response B:** Response B predicts "3 stars." This is a much more plausible rating for the given review. A 3-star rating typically indicates a neutral to slightly negative experience, which aligns well with the mixed and somewhat critical tone of the review.
4.  **Compare Responses:** Response B's prediction is significantly more accurate and reasonable than Response A's. Response A is completely off, while Response B is a good, educated guess based on the review's content.
5.  **Justify Score:** A confidence score of 1.00 is appropriate because Response B is definitively better than Response A. Response A's prediction is entirely incorrect and unhelpful, whereas Response B provides a plausible and well-reasoned prediction.</think><prob_rb_better>1.0</prob_rb_better>
\end{rawblock}
\textbf{RL (Brier) (Prediction: 0.99)}
\begin{rawblock}
<think>
Response A is overly optimistic and does not reflect the mixed sentiment in the review. The review mentions "great reviews" but also states the experience was "just ok" and "mediocre." The bread was good, but the fries and pasta were described as "soggy" and "mediocre." Given the overall mixed and somewhat negative tone, a 5-star rating is not appropriate.

Response B, on the other hand, suggests a 3-star rating, which seems more aligned with the mixed sentiment expressed in the review. A 3-star rating typically indicates an average experience, which fits the description of the restaurant being "just ok" and having "mediocre" food and service.

Therefore, Response B is better than Response A.
</think><prob_rb_better>0.99</prob_rb_better>
\end{rawblock}
\textbf{RL (Log) (Prediction: 1.00)}
\begin{rawblock}
<think>
Response A is overly optimistic and does not reflect the overall sentiment of the review. The review mentions that the food was "mediocre" and the service was "ok, not great." There is no indication that the author would give 5 stars. Response B, on the other hand, is more aligned with the sentiment expressed in the review. The author found the experience to be "just ok" and the food "mediocre," which suggests a middle rating. Therefore, Response B is more accurate and helpful.
</think>
<prob_rb_better>1.0</prob_rb_better>
\end{rawblock}

\end{autoraterbox}

\end{document}